\algnewcommand{\LineComment}[1]{\Statex \(\triangleright\) #1}
\newcommand{\set}[1]{\ensuremath{\mathcal{#1}}}
\newcommand{\con}[1]{#1} 
\newcommand{\argmax}{\operatornamewithlimits{\arg\,\max}}
\newcommand{\argmin}{\operatornamewithlimits{\arg\,\min}}
\newcommand{\baseOnly}{\texttt{BASE}\xspace }
\newcommand{\base}{\texttt{BASE$_+$}\xspace }
\newcommand{\lmp}{\texttt{LMP}\xspace }
\newcommand{\myparagraph}[1]{\vspace{5pt}\noindent\textbf{#1}}
\newtheorem{theorem}{Theorem}
\newtheorem{proposition}{Proposition}
\begin{document}

\title{Loss Max-Pooling for Semantic Image Segmentation}

\author{
    Samuel Rota Bul\`o$^{\star,\dagger}$\hspace{.15\columnwidth} Gerhard Neuhold$^\dagger$\hspace{.15\columnwidth} Peter Kontschieder$^\dagger$\\[5pt]
    $^\star$FBK - Trento, Italy - {\tt\small rotabulo@fbk.eu}\\
    $^\dagger$Mapillary - Graz, Austria - {\tt\small \{samuel,gerhard,pkontschieder\}@mapillary.com}
}

\maketitle


\begin{abstract}
   \vspace{-10pt}
   We introduce a novel loss max-pooling concept for handling imbalanced training data distributions, applicable as alternative loss layer in the context of deep neural networks for semantic image segmentation. Most real-world semantic segmentation datasets exhibit long tail distributions with few object categories comprising the majority of data and consequently biasing the classifiers towards them. Our method adaptively re-weights the contributions of each pixel based on their observed losses, targeting under-performing classification results as often encountered for under-represented object classes. Our approach goes beyond conventional cost-sensitive learning attempts through adaptive considerations that allow us to indirectly address both, inter- and intra-class imbalances. We provide a theoretical justification of our approach, complementary to experimental analyses on benchmark 
datasets. In our experiments on the Cityscapes and Pascal VOC 2012 segmentation datasets we find consistently improved results, demonstrating the efficacy of our approach.
\end{abstract}

\vspace{-15pt}
\section{Introduction}
Deep learning approaches have undoubtedly matured to the new de facto standards for many traditional computer vision tasks like image classification, object detection or semantic segmentation. Semantic segmentation aims to assign categorical labels to each pixel in an image and therefore constitutes the basis for high-level image understanding. Recent works have contributed to the progress in this research field by building upon convolutional neural networks (CNNs)~\cite{Lecun98b} and enriching them with task-specific functionalities. Extending CNNs to directly cast dense, semantic label maps~\cite{Long2015,Badrinarayanan2015}, including more contextual information~\cite{Liu2015,Chen2016,Yu2016,Ghiasi2016} or refining results with graphical models~\cite{Lin2015,Zheng2015}, have led to impressive results in many real-world applications and on standard benchmark datasets.

Few works have focused on how to properly handle imbalanced (or \textit{skewed}) class distributions, as often encountered in semantic segmentation datasets, within deep neural network training so far. With imbalanced, we refer to datasets having dominant portions of their data assigned to (few) majority classes while the rest belongs to minority classes, forming comparably under-represented categories. As (mostly undesired) consequence, it can be observed that classifiers trained without correction mechanisms tend to be biased towards the majority classes during inference. 

One way to mitigate this \textit{class-imbalance} problem is to emphasize on balanced compilations of datasets in the first place by collecting their samples approximately uniformly. Datasets following such an approach are ImageNet~\cite{Deng2009}, Caltech 101/256~\cite{FeiFei2006,Griffin2007} or CIFAR 10/100~\cite{CIFAR10}, where training, validation and test sets are roughly balanced \wrt~the instances per class. Another widely used procedure is conducting over-sampling of minority classes or under-sampling from the majority classes when compiling the actual training data. Such approaches are known to change the underlying data distributions and may result in suboptimal exploitation of available data, increased computational effort and/or risk of over-fitting when repeatedly visiting the same samples from minority classes (\cf~SMOTE and derived variants~\cite{Chawla2002,Han2005,Bunkhumpornpat2009,Jeatrakul2010} on ways to avoid over-fitting). However, its efficiency and straightforward application for tasks like image-level classification rendered sampling a commonly-agreed practice. 

Another approach termed \textit{cost-sensitive} learning changes the algorithmic behavior by introducing class-specific weights, often derived from the original data statistics. Such methods were recently investigated~\cite{Xu2014,Xu2015,Mostajabi2015,Caesar2015} for deep learning, some of them following ideas previously applied in shallow learning methods like random forests~\cite{Khoshgoftaar2007,Kontschieder2013} or support vector machines~\cite{Raskutti2004,Tang2009}. Many of these works use statically-defined cost matrices~\cite{Xu2014,Eigen2015,Xu2015,Mostajabi2015,Caesar2015} or introduce additional parameter learning steps~\cite{Khan2015}. Due to the spatial arrangement and strong correlations of classes between adjacent pixels, cost-sensitive learning techniques are preferred over resampling methods when performing dense, pixel-wise classification as in semantic segmentation tasks. However, current trends of semantic segmentation datasets show strong increase in complexity with more minority classes being added. 



\myparagraph{Contributions.}
In this work we propose a principled solution to handling imbalanced datasets within deep learning approaches for semantic segmentation tasks. Specifically, we introduce a novel loss function, which upper bounds the traditional losses where the contribution of each pixel is weighted equally.
The upper bound is obtained via a generalized max-pooling operator acting at the pixel-loss level. The maximization is taken with respect to pixel weighting functions, thus providing an adaptive re-weighting of the contributions of each pixel, based on the loss they actually exhibit. In general, pixels incurring higher losses during training are weighted more than pixels with a lower loss, thus indirectly compensating potential inter-class and intra-class imbalances within the dataset. The latter imbalance is approached because our dynamic re-weighting is class-agnostic,~\ie~we are not taking advantage of the class label statistics like previous cost-sensitive learning approaches. 

The generalized max-pooling operator, and hence our new loss, can be instantiated in different ways depending on how we delimit the space of feasible pixel weighting functions. In this paper, we focus on a particular family of weighting functions with bounded $p$-norm and $\infty$-norm, and study the properties that our loss function exhibits under this setting. Moreover, we provide the theoretical contribution of deriving an explicit characterization of our loss function under this special case, which enables the computation of gradients that are needed for the optimization of the deep neural network. 

As additional, complementary contribution we describe a performance-dependent sampling approach, guiding the minibatch compilation during training. By keeping track of the prediction performance on the training set, we show how a relatively simple change in the sampling scheme allows us to faster reach convergence and improved results. 

The rest of this section discusses some related works and how current semantic segmentation approaches typically deal with the class-imbalance problem, before we provide a compact description for the notation used in the rest of this paper. In Sect.~\ref{sec:method} we describe how we depart from the standard, uniform weighting scheme to our proposed adaptive, pixel-loss max-pooling and the space of weighting functions we are considering. Sect.~\ref{sec:Computation} and~\ref{sec:alg} describe how we eventually solve the novel loss function and provide algorithmic details, respectively. In Sect.~\ref{sec:Experiments} we assess the performance of our contributions on the challenging Cityscapes and Pascal VOC segmentation benchmarks before we conclude in Sect.~\ref{sec:Conclusion}.
Note that this is an extended version of our CVPR 2017 paper~\cite{RotNeuKon17cvpr}. 




\myparagraph{Related Works.}
Many semantic segmentation works follow a relatively simple cost-sensitive approach via an inverse frequency rebalancing scheme,~\eg~\cite{Xu2014,Xu2015,Mostajabi2015,Caesar2015} or median frequency re-weighting~\cite{Eigen2015}. Other approaches construct best-practice heuristics by~\eg restricting the number of pixels to be updated during backpropagation: The work in~\cite{Bansal2016} suggests increasing the minibatch size while decreasing the absolute number of (randomly sampled) pixel positions to be updated. In~\cite{Wu2016}, an approach coined online bootstrapping is introduced, where pixel losses are sorted and only the \textit{k} highest loss positions are updated. A similar idea termed \textit{online hard example mining}~\cite{Shrivastava2016} was found to be effective for object detection, where high-loss bounding boxes retained after a non-maximum-suppression step were preferably updated. The work in~\cite{Huang2016} tackles class imbalance via enforcing inter-cluster and inter-class margins, obtained by employing quintuplet instance sampling with a triple-header hinge loss. Another recent work~\cite{Khan2015} proposed a cost-sensitive neural network for classification, jointly optimizing for class dependent costs and the standard neural network parameters. The work in~\cite{Shen2015} addresses the problem of contour detection with convolutional neural networks (CNN), combining a specific loss for contour versus non-contour samples with the conventional log-loss. In separate though related research fields, focus was put on directly optimizing the target measures like Area under curve (AUC), Intersection over Union (IoU or Jaccard Index) or Average class (AC)~\cite{Blaschko2008,Ranjbar2010,Nowozin2014,Ahmed2015}. The work in~\cite{Gulcehre2013} is proposing a nonlinear activation function computing the $L_p$ norm of projections from the lower layers, allowing to interpret max-, average- and root-mean-squared-pooling operators as special cases of their activation function.

\myparagraph{Notation.}
In this paper, we denote by $\set A^{\set B}$ the space of functions mapping elements in the set $\set B$ to elements in the set $\set A$,
while $\set A^n$ with $n$ a natual number denotes the usual product set of $n$-tuples with elements in $\set A$.
The sets of real and integer numbers are $\mathbb R$ and $\mathbb Z$, respectively.
Let $f,g\in\mathbb R^\set A$, $c\in\mathbb R$. 
Operations defined on $\mathbb R$ such as, \eg addition, multiplication, exponentiation, \etc, are inherited by $\mathbb R^{\set A}$ via pointwise application (for instance, $f+g$ is the function $z\in\set A\mapsto f(z)+g(z)$ and $f^c$ is the function $z\in\set A\mapsto f(z)^c$).
Additionally, we use the notations:
\begin{compactitem}
\item $\langle f\rangle_{\set B}  = \sum_{z\in\set A\cap\set B}f(z)$, and $\langle f\rangle=\langle f \rangle_{\set A}$
\item $\Vert f\Vert_{p,\set B} = \left(\sum_{z\in\set A\cap\set B}f(z)^p\right)^{1/p}$ and $\Vert f\Vert_p=\Vert f\Vert_{p,\set A}$
\item $f\cdot g=\sum_{z\in\set A}f(z)g(z)$
\item $f\preceq c \iff (\forall z\in\set A)(f(z)\leq c)$
\item $(f)_+$ denotes the function $z\in\set A\mapsto \max\{f(z),0\}$.
\end{compactitem}



\section{Pixel-Loss Max-Pooling}\label{sec:method}
The goal of semantic image segmentation is to provide an assignment of class labels to each pixel of an image.
The input space for this task is denoted by $\set X$ and corresponds to the set of possible images.
For the sake of simplicity, we assume all images to have the same number of pixels . We denote by $\set I\subset\set Z^2$ the set of pixels within an image, and let $\con n$ be the number of pixels, \ie $\con n=|\set I|$. 
The output space for the segmentation task is denoted by $\set Y$ and corresponds to all pixelwise labelings with classes in $\set C$.
Each labeling $y\in\set Y$ is a function mapping pixels to classes, \ie $\set Y=\set C^\set I$.

\myparagraph{Standard setting.}
The typical objective used to train a model $f_\theta\in\set Y^\set X$  with parameters $\theta$ (\eg a fully-convolutional network),
given a training set $\set T\subset\set X\times\set Y$, takes the following form:
\begin{equation}
    \min\left\{ \sum_{(x,y)\in\set T} L(f_\theta(x),y) +\lambda R(\theta)\,:\,\theta\in\Theta\right\}\,,
    \label{eq:training_objective}
\end{equation}
where $\Theta$ is the set of possible network parameters, $L \in\mathbb R^{\set Y\times\set Y}$ is a loss function penalizing wrong image labelings and $R\in\mathbb R^\Theta$ is a regularizer.
The loss function $L$ commonly decomposes into a sum of pixel-specific losses as follows
\begin{equation}
    L(\hat y,y)=\frac{1}{\con n}\langle\ell_{\hat y y}\rangle\,, 
    \label{eq:image_loss}
\end{equation}
where  $\ell_{\hat yy}\in\mathbb R^\set I$ assigns to each pixel $u\in\set I$ the loss incurred 
for predicting class $\hat y(u)$ instead of $y(u)$.
In the rest of the paper, we assume $\ell_{\hat y y}$ to be \emph{non-negative} and \emph{bounded} (\ie pixel losses are finite).

\myparagraph{Loss max-pooling.} The loss function defined in \eqref{eq:image_loss} weights uniformly the contribution of each pixel within the image. 
The effect of this choice is a bias of the learner towards elements that are dominant within the image (\eg sky, building, road) to the detriment of elements occupying smaller portions of the image. 
In order to alleviate this issue, we propose to 
adaptively reweigh the contribution of each pixel based on the actual loss we observe. 
Our goal is to shift the focus on image parts where the loss is higher, while retaining a theoretical link to the loss in \eqref{eq:image_loss}. The solution we propose is an upper bound to $L$, which is constructed by relaxing the pixel weighting scheme. In general terms, we design a convex, compact space of weighting functions $\set W\subset \mathbb R^\set I$, subsuming the uniform weighting function, \ie $\{\frac{1}{\con n}\}^\set I\subset \set W$, and parametrize the loss function in \eqref{eq:image_loss} as 
\begin{equation}
    L_{w}(\hat y,y)=w\cdot \ell_{\hat yy}\,,
    \label{eq:parametrized_loss}
\end{equation}
with $w\in\set W$. Then, we define a new loss function $L_\set W\in\mathbb R^{\set Y\times\set Y}$, which targets the highest loss incurred with a weighting function in $\set W$, \ie
\begin{equation}
    L_{\set W}(\hat y,y)=\max\{L_w(\hat y,y)\,:\,w\in\set W\}\,.
    \label{eq:new_loss}
\end{equation}
Since the uniform weighting function belongs to $\set W$ and we maximize over $\set W$, it follows that $L_{\set W}$ upper bounds $L$, \ie $L_\set W(\hat y,y)\geq L(\hat y,y)$ for any $\hat y,y\in\set Y$. Consequently, we obtain an upper bound to \eqref{eq:training_objective} if we replace $L$ by $L_{\set W}$.

The title of our work, which ties the loss to max-pooling, is inspired by the observation that
the loss proposed in~\eqref{eq:new_loss} is the application of a generalized max-pooling operator acting on the pixel-losses. Indeed, we recover a conventional max-pooling operator as a special case if $\set W$ is the set of probability distributions over $\set I$. Similarly, the standard loss in~\eqref{eq:image_loss} can be regarded as the application of an average-pooling operator, which again can be boiled down to a special case of~\eqref{eq:new_loss} under a proper choice of $\set W$.

\myparagraph{The space $\set W$ of weighting functions.} 
The property that the loss max-pooling operator exhibits depends on the shape of $\set W$. 
Here, we restrict the focus to weighting functions with $p$-norm ($p\geq 1)$ and $\infty$-norm 
upper bounded by $\gamma$ and $\tau$, respectively (see Fig.~\ref{fig:pnorm} for an example):
\begin{equation}
    \set W=\left\{w\in\mathbb R^\set I\,:\,\Vert w\Vert_p\leq \gamma,\,\Vert w\Vert_{\infty}\leq\tau\right\}\,.
    \label{eq:W}
\end{equation}
We fix the bound on the $p$-norm to $\gamma=\con n^{-1/q}$ with $q=\frac{p}{p-1}$, which corresponds to the $p$-norm of a uniform weighting function. 
Instead, $p$ and $\tau$ are left as hyper-parameters. Possible values of $\tau$ should be chosen in the range [$\con n^{-1},\gamma]$.
Indeed, lower values would prevent the uniform weighting function from belonging to $\set W$, while higher values would be equivalent to putting $\tau=\gamma$.
\begin{figure}[t]
    \centering
    \includegraphics[height=.39\columnwidth]{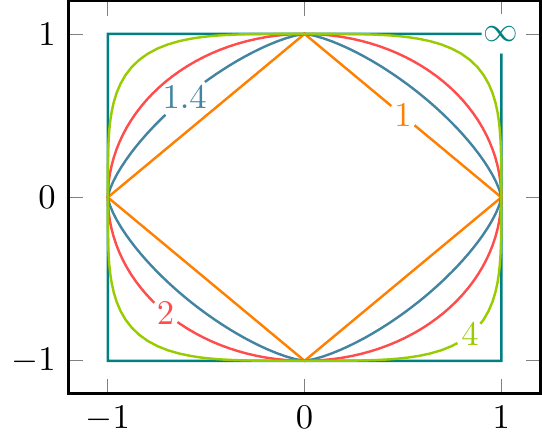}
    \includegraphics[height=.39\columnwidth]{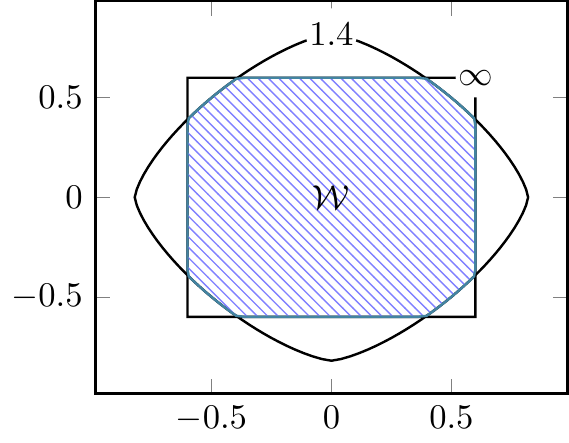}
    \caption{Left: Plot of $\Vert w\Vert_p=1$ in the $2$-dimensional case for $p\in\{1,1.4,2,4,\infty\}$. Right: Set $\set W$ when $n=2$, $p=1.4$ and $\tau=0.6$.}
    \label{fig:pnorm}
    \vspace{-5pt}
\end{figure}

Intuitively, the user can control the pixel selectivity degree of the pooling operation in \eqref{eq:new_loss} by changing $p$.
Indeed, the optimal weights will be in general concentrated around a single pixel as $p\to 1$ and be uniformly spread across pixels as $p\to\infty$.
On the other hand, $\tau$ allows to control, through the relation $m=\left(\frac{\gamma}{\tau}\right)^p$, the minimum number of pixels (namely $\lceil m\rceil$) that should be supported by the optimal weighting function.
In Fig.~\ref{fig:weights} we show some examples, given synthetically-generated losses for $n=100$ pixels (sorted for better visualization). On the left, we fix $m=n/3$ (\ie at least $1/3$ of the pixels should be supported) and report the optimal weightings for different values of $p$. As we can see, the weights get more peaked on high losses as $p$ moves towards $1$, but the constraint on $m$ prevents selecting less than $\lceil m\rceil$ pixels. On the other hand, the weights tend to become uniform as $p$ approaches $\infty$. The plot on the right fixes $p=1.7$ and varies $m\in\{0,0.1n,0.2n,0.4n,0.8n,n\}$. We see that the weights tend to uniformly support a larger share of pixels as we increase $m$, yielding the uniform distribution when $m=n$.
\begin{figure*}[thb]
    \centering
    \includegraphics[width=\columnwidth]{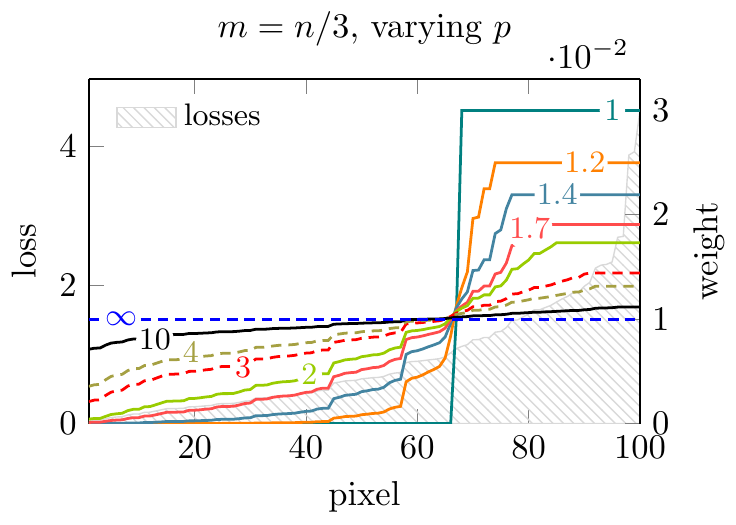}
    \includegraphics[width=\columnwidth]{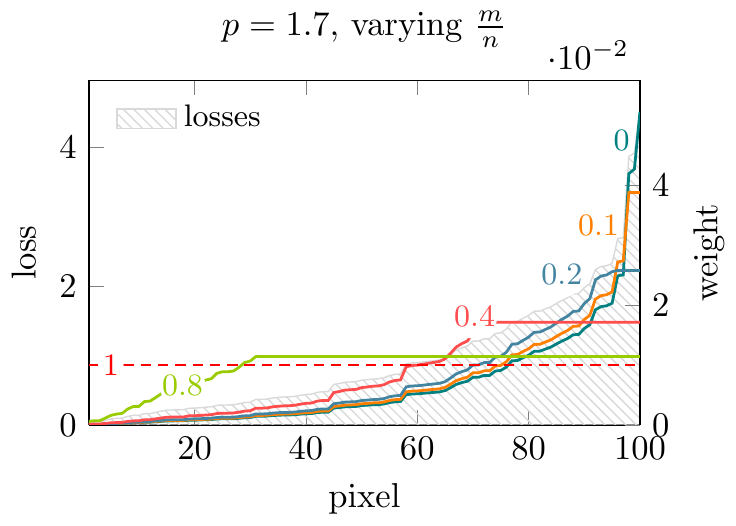}
	\vspace{-10pt}    
    \caption{Example of optimal weightings $w^*$ for $n=100$ pixels. Left: $m=n/3$ and varying values of $p\in\{1,1.2,1.4,1.7,2,3,4,10,\infty\}$. Right: $p=1.7$ and varying values of $\frac{m}{n}\in\{0,0.1,0.2,0.4,0.8,1\}$. Losses are synthetically generated and sorted for visualization purposes.}
    \label{fig:weights}
    \vspace{-10pt}
\end{figure*}


\vspace{-10pt}\section{Computation of $L_\set W$}\label{sec:Computation}

The maximization problem in \eqref{eq:new_loss} is concave and has an explicit-form solution if $\set W$ is defined as in \eqref{eq:W}.
We provide the details by cases, considering the parametrization $(p,m)$ in place of $(p,\tau)$, because $m$ has a clear intuitive meaning as mentioned in the previous section. Valid parametrizations satisfy $p\geq 1$ and $1\leq m\leq n$.


\subsection{Case $p>1$}
%
To address this case, we consider the following dual formulation of the maximization problem in~\eqref{eq:new_loss}:
\begin{equation}\label{eq:dual}
L_\set W(\hat y,y)=\min\left\{g(\lambda)\,:\lambda\succeq 0,\,\lambda\in\mathbb R^\set I\right\}\,,
\end{equation}
where $\lambda$ is the dual variable accounting for the constraint $w\preceq \tau$, which is equivalent to $\Vert w\Vert_\infty\leq\tau$, and
\[g(\lambda)=\tau \langle\lambda\rangle+\max\left\{w\cdot(\ell_{\hat y y}-\lambda):\Vert w\Vert_p\leq \gamma,\,w\in\mathbb R^{\set I}\right\}.\]
Moving from the primal to the dual formulation is legitimate because both formulations share the same optimal value. Indeed, the Slater's condition applies~\cite{BoyVan04} (\eg function $z\in\set I\mapsto 0$ is strictly feasible). 

The maximization in $g(\lambda)$ is the definition of the \emph{dual norm} \cite[Appendix A.1.6]{BoyVan04} of the $p$-norm, which corresponds to the $q$-norm with $q=\frac{p}{p-1}$, evaluated in $\ell_{\hat y y}-\lambda$ and scaled by $\gamma$. Accordingly, we have that
\begin{equation}\label{eq:dual_obj}
    g(\lambda)=\tau\langle\lambda\rangle+\gamma\Vert \ell_{\hat y y}-\lambda\Vert_q\,.
\end{equation}

We get a solution to~\eqref{eq:dual} by finding a point $\lambda^*$ that satisfies
\begin{equation}
    \lambda^*=\left(\ell_{\hat y y}-m^{-1/q} \Vert \ell_{\hat y y}-\lambda^*\Vert_q\right)_+
\label{eq:KKT}
\end{equation}
and maximizes $\Vert\ell_{\hat y y}-\lambda\Vert_q$ (see, Prop.~\ref{prop:KKT} in the appendix).
However, computing such a solution from \eqref{eq:KKT} is not straightforward due to the recursive nature of the formula involving multiple variables (elements of $\lambda^*$).
We reduce it to the problem of finding the \emph{largest} root of the single-variable function
\begin{equation}\label{eq:KKT_alpha}
\eta(\alpha)=(m-|\set J_{\alpha}|)\alpha^q - \langle \ell^q_{\hat y y}\rangle_{\overline{\set J}_\alpha}\,,
\end{equation}
where
$\set J_\alpha=\left\{ u\in\set I\,:\,\ell_{\hat y y}(u)> \alpha \right\}$ 
and $\overline{\set J}_\alpha=\set I\setminus\set J_\alpha$ is its complement. 
This characterization of solutions to the dual formulation~\eqref{eq:dual} in terms of roots of $\eta$ is proved correct in the appendix (Prop.~\ref{prop:alpha-sol}) and it is used to derive the theorem below,
which provides an explicit formula for $L_\set W(\hat y,y)$, the optimal weighting function $w^*$ of the maximization in~\eqref{eq:new_loss} and the optimal dual variable $\lambda^*$:

\begin{theorem}
    Let $1\leq q<\infty$, $1\leq m\leq n$ and $\alpha^*=\frac{\Vert\ell_{\hat y y}\Vert_{q,\overline{\set J}^*}}{(m-|\set J^*|)^{1/q}}$,
    where $\set J^*=\{u\in\set I\,:\,\eta(\ell_{\hat y y}(u))> 0\}$ and $\overline{\set J}^*=\set I\setminus\set J^*$. Then
\begin{equation}
    L_\set W(\hat y,y)=\tau\left[\langle \ell_{\hat y y}\rangle_{\set J^*}+ (m-|\set J^*|)\alpha^*\right]\,.
    \label{eq:final_obj}
\end{equation}
Moreover, $\lambda^*=|\ell_{\hat y y}-\alpha^*|_+$ is a minimizer of the dual formulation in~\eqref{eq:dual}, while
\[ w^*(u)=
    \begin{cases}
        \tau&\text{if }u\in\set J^*\\
        \tau\left(\displaystyle\frac{\ell_{\hat y y}(u)}{\alpha^*}\right)^{q-1}&\text{if }u\in\overline{\set J}^*\text{ and }\alpha^*>0\\
        0&\text{otherwise}
    \end{cases}
\] 
is a maximizer of the primal formulation in~\eqref{eq:new_loss}.
    \label{thm:main_theorem}
\end{theorem}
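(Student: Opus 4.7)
The plan rests on the two appendix propositions already invoked in the text: Prop.~\ref{prop:KKT} reduces the dual problem~\eqref{eq:dual} to the fixed-point condition~\eqref{eq:KKT}, and Prop.~\ref{prop:alpha-sol} further reduces it to locating the \emph{largest} root $\alpha^*$ of the scalar function $\eta$ in~\eqref{eq:KKT_alpha}, with the dual optimum then given by $\lambda^*=(\ell_{\hat y y}-\alpha^*)_+$. Everything the theorem claims must therefore be read off from this single scalar root.

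\textbf{Step 1: identifying $\set J_{\alpha^*}$ with $\set J^*$.} I would first show that $\eta$ is continuous on $[0,\infty)$ and strictly increasing on the regime where $m>|\set J_\alpha|$. Continuity follows from a telescoping observation: as $\alpha$ crosses a level $\ell_{\hat y y}(u)$, $|\set J_\alpha|$ drops by one and $\langle\ell_{\hat y y}^q\rangle_{\overline{\set J}_\alpha}$ gains exactly $\ell_{\hat y y}(u)^q$, and the two changes cancel in $\eta$. Between such levels, the factor $(m-|\set J_\alpha|)\alpha^q$ is strictly increasing in $\alpha$ whenever $m>|\set J_\alpha|$. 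Together these force $\eta(\ell_{\hat y y}(u))>0$ iff $\ell_{\hat y y}(u)>\alpha^*$, i.e.\ $\set J^*=\set J_{\alpha^*}$ as defined in the theorem.

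\textbf{Step 2: closed form for $\alpha^*$ and for $L_\set W$.} Substituting $\set J_{\alpha^*}=\set J^*$ into $\eta(\alpha^*)=0$ yields $(m-|\set J^*|)(\alpha^*)^q=\langle\ell_{\hat y y}^q\rangle_{\overline{\set J}^*}$, which is exactly the displayed formula for $\alpha^*$. Plugging $\lambda^*=(\ell_{\hat y y}-\alpha^*)_+$ into the dual objective~\eqref{eq:dual_obj} and splitting the sums over $\set J^*$ and $\overline{\set J}^*$, a short calculation gives $\langle\lambda^*\rangle=\langle\ell_{\hat y y}\rangle_{\set J^*}-|\set J^*|\alpha^*$ and $\Vert\ell_{\hat y y}-\lambda^*\Vert_q^q=|\set J^*|(\alpha^*)^q+\langle\ell_{\hat y y}^q\rangle_{\overline{\set J}^*}=m(\alpha^*)^q$, the last step using the root equation again. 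The algebraic identity $\gamma m^{1/q}=m\tau$, equivalent to $\gamma/m^{1/p}=\tau$ and a direct consequence of $m=(\gamma/\tau)^p$ together with $p/q=p-1$, then collapses $g(\lambda^*)$ to~\eqref{eq:final_obj}.

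\textbf{Step 3: recovering the primal optimum $w^*$.} The inner maximization in the definition of $g(\lambda^*)$ is the attainment of the dual norm, so the H\"older equality condition forces any maximizer to satisfy $w^*\propto(\ell_{\hat y y}-\lambda^*)^{q-1}$ with $\Vert w^*\Vert_p=\gamma$. Since $\ell_{\hat y y}-\lambda^*=\min(\ell_{\hat y y},\alpha^*)$ and $\Vert\min(\ell_{\hat y y},\alpha^*)\Vert_q=m^{1/q}\alpha^*$ from Step~2, the normalizing factor is $\gamma/(m^{1/q}\alpha^*)^{q-1}=\tau/(\alpha^*)^{q-1}$ (again by $\gamma/m^{1/p}=\tau$), yielding precisely the piecewise expression in the theorem. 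Feasibility reduces to $\Vert w^*\Vert_\infty\leq\tau$ (saturated on $\set J^*$, strictly smaller on $\overline{\set J}^*$ since $\ell_{\hat y y}(u)\leq\alpha^*$ there) and $\Vert w^*\Vert_p^p=\tau^p|\set J^*|+\tau^p\langle\ell_{\hat y y}^q\rangle_{\overline{\set J}^*}/(\alpha^*)^q=\tau^p m=\gamma^p$ via $p(q-1)=q$ and the root equation. A direct computation of $w^*\cdot\ell_{\hat y y}$ recovers~\eqref{eq:final_obj}, certifying primal optimality and closing the duality gap.

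\textbf{Main obstacle.} The delicate piece is Step~1: the monotonicity/continuity analysis of $\eta$ has to accommodate ties in loss values and, more awkwardly, the degenerate regime $\alpha^*=0$. In that regime the root equation forces $\ell_{\hat y y}$ to vanish on $\overline{\set J}^*$, so both the second branch of $w^*$ and the term $(m-|\set J^*|)\alpha^*$ in~\eqref{eq:final_obj} degenerate and the third branch of $w^*$ is what is actually needed; feasibility ($|\set J^*|\leq m$) and optimality there must be argued separately, but both follow once the vanishing of $\ell_{\hat y y}$ on $\overline{\set J}^*$ is noted.
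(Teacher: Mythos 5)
Your proposal is correct and follows essentially the same route as the paper's proof: reduce via the dual and Propositions~\ref{prop:KKT} and~\ref{prop:alpha-sol} to the largest root of $\eta$, identify $\set J^*$ with $\set J_{\alpha^*}$ through the monotonicity of $\eta$ once it becomes positive (your continuity/telescoping argument repackages the paper's Propositions~\ref{prop:monotonicity}--\ref{prop:aux}), evaluate $g(\lambda^*)$ using $\gamma m^{1/q}=\tau m$, and certify $w^*$ by direct evaluation of $w^*\cdot\ell_{\hat y y}$. The only cosmetic difference is that you derive $w^*$ from the H\"older equality condition of the dual-norm attainment rather than merely verifying it, and you correctly flag the two delicate points ($|\set J^*|<m$ and the $\alpha^*=0$ branch) that the paper handles in Propositions~\ref{prop:div-by-zero} and~\ref{prop:only_diff}.
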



\subsection{Case $p=1$} \label{ss:special}
For this case, the solution takes the same form as in \eqref{eq:final_obj}, but $\set J^*$ becomes the subset of $\lfloor m \rfloor$ pixels with the highest losses, while $\alpha^*$ is the highest loss among the remaining pixels ($\alpha^*=0$ if $\set J^*=\set I$).
As for the optimal weighting function $w^*$, let $\set J^+=\{u\in\set I\,:\,\ell_{\hat y y}(u)=\alpha^*\}\setminus\set J^*$. Then for any probability distribution $\mu$ over $\set J^+$
\[
w^*(u)=
\begin{cases}
    \tau&\text{if }u\in\set J^*\\
    \tau(m-\lfloor m\rfloor)\mu(u)&\text{if }u\in\set J^+\\
    0&\text{otherwise,}
\end{cases}
\]
is an optimal solution for the primal (see, Thm.~\ref{thm:special} in the appendix).

\section{Algorithmic Details}\label{sec:alg}

The key quantities to compute are $\set J^*$ and $\alpha^*$. Indeed, once those are available we can determine the loss $L_\set W(\hat y,y)$ and compute gradients with respect to the segmentation model's parameters (we show it later in this section).
We report in Algorithm~\ref{alg:J} the pseudo-code of the computation of $\set J^*$ and $\alpha^*$. 
We start sorting the losses (line 1). This yields a bijective function $\pi\in\set I^{\{1,\dots,n\}}$ satisfying $\ell_{\hat y y}(\pi_i)\leq\ell_{\hat y y}(\pi_j)$ if $i<j$ (we wrote $\pi_i$ for $\pi(i)$).
Case $p=1$ (line 13) is trivial, since we know that the last $\lfloor m\rfloor$ ranked pixels will form $\set J^*$, while $\alpha^*$ corresponds to the highest loss among the remaining pixels, or $0$ if no pixel is left (see, Subsection~\ref{ss:special}).
As for case $p>1$, we walk through the losses in ascending order and stop as soon as we find an index $i$
satisfying one of the following conditions: 
a) $i=n$ and $\eta_n\leq 0$, or b) $\eta_i> 0$.
If the first condition is hit, then $\set J^*=\emptyset$ and, hence, $\alpha^*=\Vert\ell_{\hat y y}\Vert_q/m^{1/q}$. This is indeed what we obtain in line 11, where $i=n+1$ so that $\alpha^*=(a_n/c_n)^{1/q}$, where $c_n=m$ and $a_n=\sum_{j=1}^n\ell^q_{\hat y y}(\pi_j)$. Instead, if condition b is hit, then we have by Prop.~\ref{prop:algo} in the appendix that $\set J^*=\{\pi_j\,:\,i\leq j\leq n\}$. Consequently, $c_i=m-n+i=m-|\set J^*|$ and $a_i=\langle\ell^q_{\hat y y}\rangle_{\overline{\set J}^*}$ so that $\alpha^*=(a_i/c_i)^{1/q}$.

\begin{algorithm}[t]
	\begin{algorithmic}[1]
        \Require $m\in[1,n]$, $p\in[1,\infty]$, $n>0$ pixel losses $\ell_{\hat y y}$
        \State $\pi\leftarrow\text{sort}(\ell_{\hat yy})$
        \If{$p>1$}
        \State $q\leftarrow\frac{p}{p-1}$
        \State $c_0\leftarrow m-n,\quad i\leftarrow 0,\quad a_0\leftarrow 0$ 
        \Repeat
        \State $i\leftarrow i+1,\quad c_i\leftarrow c_{i-1}+1$
        \State $a_i\leftarrow a_{i-1}+\ell^q_{\hat y y}(\pi_i)$
        \State $\eta_i\leftarrow c_i\,\ell^q_{\hat y y}(\pi_i)-a_i$
        \Until{$\eta_i>0$ \textbf{or} $i=n$}
        \If{$\eta_i\leq 0$}{} $i\leftarrow i+1$
        \EndIf
        \State $\alpha^*\leftarrow\left(\frac{a_{i-1}}{c_{i-1}}\right)^{1/q}$
        \Else
        \State $i\leftarrow n-\lfloor m \rfloor+1$
        \State $\alpha^*\leftarrow \ell_{\hat y y}(\pi_{i-1})\text{\bf~if }i>0\text{\bf~else }0$
        \EndIf
        \State \Return $\set J^*\leftarrow\{\pi_j\,:\, i\leq j\leq n\},\quad \alpha^*$
	\end{algorithmic}
\caption{Compute $\set J^*$, $\alpha^*$}
\label{alg:J}
\end{algorithm}


\myparagraph{Gradient.}
In order to train the semantic segmentation model we need to compute the partial derivative $\frac{\partial L_{\set W}}{\partial \hat y}(\hat y,y)$. It exists 
\emph{almost} everywhere\footnote{Precisely, it exists in all $(\hat y,y)$ having an open neighborhood where $\set J^*$ does not change.} and is given by (see derivations in the appendix)
\[
    \frac{\partial L_{\set W}}{\partial \hat y}(\hat y,y)=\frac{\partial \ell_{\hat y y}}{\partial \hat y} w^*\,. 
\]
Note that we will use the same function also where the partial derivative technically does not exist.\footnote{This is a common practice within the deep learning community (see, \eg how the derivative of ReLU, or max-pooling, are computed).}

\myparagraph{Implementation notes.} 
For values of $p$ close to 1, we have that $q$ becomes arbitrarily large and this might cause numerical issues in Algorithm~\ref{alg:J}. 
A simple trick to improve stability consists in normalizing the losses with a division by the maximum loss, \ie we consider $\frac{\ell_{\hat y y}}{\ell_{\hat y y}(\pi(n))}$ in place of $\ell_{\hat y y}$. This modification then requires multiplying $L_{\set W}(\hat y,y)$ by $\ell_{\hat y y}(\pi(n))$ to adjust the objective, while the optimal primal solution $w^*$ remains unaffected by the change.

\myparagraph{Complimentary sampling strategy.}
In addition to our main contribution described in the previous sections, we propose a complimentary idea on how to compile minibatches during training. We propose a mixed sampling approach taking both, uniform sampling from the training data's global distribution and the current performance of the model into account. As a surrogate for the latter, we keep track of the per-class Intersection over Union (IoU) scores on the training data and conduct inverse sampling, which will suggest to preferably pick from under-performing classes (that are often strongly correlated with minority classes). Blending this performance-based sampling idea with uniform sampling ensures to maintain stochastic behavior during training and therefore helps not to over-fit to particular classes.

\section{Experiments}\label{sec:Experiments}
We have evaluated our novel loss max-pooling (\lmp) approach on the Cityscapes~\cite{Cordts2016} and the extended Pascal VOC~\cite{Everingham2010} semantic image segmentation datasets. In particular, we have performed an extensive parameter sweep on Cityscapes, assessing the performance development for different settings of our hyper-parameters $p$ and $m$ (see Equ.~\eqref{eq:W} and Fig.~\ref{fig:weights}). All reported numbers are Intersection-over-Union (IoU) (or \textit{Jaccard}) measures in $[\%]$, either averaged over all classes or provided on a per-class basis.

\subsection{Network architecture} 
For all experiments, we are using a network architecture similar to the one of DeepLabV2~\cite{Chen2016}, implemented within Caffe~\cite{Jia2014} using cuDNN for performance improvement and NCCL\footnote{\scriptsize\url{https://github.com/NVIDIA/nccl}} for multi-GPU support. In particular, we are using ResNet-101~\cite{He2015b} in a fully-convolutional way with atrous extensions~\cite{Holschneider1987,Yu2016} for the base layers before adding DeepLab's atrous spatial pyramid pooling (ASPP). Finally, we apply upscaling (via deconvolution layers with fixed, uniform weights and therefore performing bilinear upsampling) before using standard softmax loss for all baseline methods \baseOnly, \base and for the inverse median frequency weighting~\cite{Eigen2015} while we use our proposed loss max-pooling layer in \lmp. Both our approaches, \base and \lmp are using the complimentary sampling strategy for minibatch compilation as described in the previous section, while plain uniform sampling in \baseOnly leads to similar results as reported in~\cite{Chen2016}. We also report results of our new loss with plain uniform sampling (``Proposed loss only''). To save computation time and provide a conclusive parameter sensitivity study for our approach, we disabled both, multi-scale input to the networks and post processing via conditional random fields (CRF). We consider both of these features as complementary to our method and highly relevant for improving the overall performance in case time and hardware budgets permit to do so. However, our primary intention is to demonstrate the effectiveness of our \lmp, under comparable settings with other baselines like our \base. All our reported numbers and plots are obtained from fine-tuning the MS-COCO~\cite{LinMSCOCO2014} pre-trained CNN of~\cite{Chen2016}, which is available for download\footnote{\scriptsize\url{http://liangchiehchen.com/projects/DeepLabv2_resnet.html}}. In order to provide statistically more significant results, we provide mean and standard deviations obtained by averaging the results at certain steps over the last 30k training iterations. We only report results obtained from a single CNN as opposed to using an ensemble of CNNs, trained using the stochastic gradient descent (SGD) solver with polynomial decay of the learning rate ("poly" as described in~\cite{Chen2016}) setting both, decay rate and momentum to $0.9$. For data augmentation (Augm.), we use random scale perturbations in the range $0.5-1.5$ for patches cropped at positions given by the aforementioned sampling strategy, and horizontal flipping of images. 

\subsection{Cityscapes}
This recently-released dataset contains street-level images, taken at daytime from driving scenes in 50 major central European cities in Germany, France and Switzerland. Images are captured at high resolution (2.048 $\times$ 1.024) and are divided into training, validation and test sets holding 2.975, 500 and 1.525 images, respectively.  For training and validation data, densely annotated ground truth into 20 label categories (19 objects + ignore) is publicly available, where the 6 most frequent classes account for $\approx$90\% of the annotated pixel mass. Following previous works~\cite{Chen2016,Wu2016}, we report results obtained on the validation set. During training, we use minibatches comprising 2 image crops, each of size $550\times 550$. The initial learning rate is set to $2.5e^{-4}$, and we run a total number of $165k$ training iterations. 
\begin{table}[t]
\centering
\resizebox{0.95\columnwidth}{!}{\footnotesize
\begin{tabular}{lccccc}
  \toprule
   $p$ & 150k & 160k & 165k & Mean & Std.Dev. \\
  \midrule
   1.0	& 74,35	& 74.64	& 74.64	& 74.54 & 0.17 \\
   1.1	& 74.34	& 74.61 &	74.60 &	74.52 &	0.15 \\
   1.2 &	74.42 &	74.60 &	74.77 &	\underline{74.60} &	0.18\\
   1.3 &	74.52 &	74.71 &	74.69 &	\textbf{74.64}&	0.10\\
   1.4 &	74.33 &	74.51 &	74.49 &	74.44&	0.10\\
   1.5 &	74.03 &	73.99 &	74.04 &	74.02&	0.03\\
   1.6 &	74.05 &	74.42 &	74.52 &	74.33&	0.25\\
   1.7 &	74.10 &	74.56 &	74.74 &	74.57&	0.17\\
   1.8 &	73.65 &	74.18 &	74.17 &	74.00&	0.30\\
   1.9 &	73.97 &	74.21 &	74.48 &	74.22&	0.26\\
   2.3 &    73.93 & 74.23 & 74.12 & 74.09 & 0.15\\
 \midrule
  \base & 73.12 & 73.16 & 73.10 & 73.13 & 0.03\\
  \bottomrule
\end{tabular}}
   \caption{Sensitivity analysis for $p$ parameter with $m$ fixed to $25\%$ of valid pixels per crop using \textit{efficient tiling} at test time. Numbers in $[\%]$ correspond to results on Cityscapes validation set after indicated training iterations (and averages with corresponding std.dev. thereof). Boldface and underlined values are in correspondence with best and second best results, respectively. Bottom-most row shows results from our baseline \base under the efficient tiling setting.}
   \label{tab:Sensitivity_p}
\vspace{-10pt}
\end{table}

In Tab.~\ref{tab:Sensitivity_p}, we provide a sensitivity analysis for hyper-parameter $p$, fixing $m$ to 25\% of non-ignore per-crop pixels. Due to the large resolution of images and considerable number of trainings to be run, we employ different tiling strategies during inference. 
Numbers in Tab.~\ref{tab:Sensitivity_p} are obtained by using our so-called \textit{efficient tiling} strategy, dividing the validation images into five non-overlapping, rectangular crops with full image height. With this setting, the best result was obtained for $p=1.3$, closely followed by $p=1.2$. It can be seen that increasing values for $p$ show a trend towards \base results, empirically confirming the theoretical underpinnings from Sect.~\ref{sec:method}. After fixing $p=1.3$, we conducted additional experiments with $m$ selected in a way to correspond to selecting at least $10\%$, $25\%$ or $50\%$ of non-ignore per-crop pixels, obtaining $74.09\%\pm 0.22,\, \mathbf{74.64}\pm 0.10$ and $73.44\pm 0.21$ on the validation data, respectively. Finally, we locked in on $p=1.3$ and $25\%$, running an \textit{optimized tiling} strategy on the validation set where we consider a 200 pixel overlap between tiles, allowing for improved context capturing. The final class label decisions for the first half of the overlap area are then exclusively taken by the left tile while the second half is provided by the right tile, respectively. The resulting scores are listed in Tab.~\ref{tab:CityscapesScores}, demonstrating improved results over \baseOnly, \base and related approaches like DeepLabV2~\cite{Chen2016} (even when using CRF) or~\cite{Wu2016} with deeper ResNet and online bootstrapping (BS). Also our loss alone, \ie without the complimentary sampling strategy, yields improved results over both \baseOnly and \base.
\begin{table}[t]
\centering
\resizebox{0.95\columnwidth}{!}{\footnotesize
    \begin{tabular}{lc}
  \toprule
   Method & mean IoU \\
  \midrule
   \cite{Chen2016} RN-101 \& Augm. \& ASPP & 71.0\\
   \cite{Chen2016} RN-101 \& Augm. \& ASPP \& CRF & 71.4\\
   \cite{Wu2016} FCRN-101 \& Augm. & 71.16\\
   \cite{Wu2016} FCRN-152 \& Augm. & 71.51\\
   \cite{Wu2016} FCRN-152 \& Online BS \& Augm. & 74.64\\
   \midrule
   \texttt{Our approaches - Resnet-101} & \\   
   \baseOnly Augm. \& ASPP & 72.55 $\pm 0.04$\\
   \base    Augm. \& ASPP & 73.63 $\pm 0.04$ \\
   \cite{Eigen2015} Inverse median freq. \& Augm. \& ASPP & 69.81 $\pm 0.08$ \\
   Proposed loss only \& Augm. \& ASPP &  74.17 $\pm 0.03$ \\
   \lmp    Augm. \& ASPP & \textbf{75.06} $\pm 0.09$\\   
  \bottomrule
\end{tabular}}
\caption{ResNet-based results (in $[\%]$) on validation set of Cityscapes dataset using \textit{optimized tiling}.}
   \label{tab:CityscapesScores}
   \vspace{-10pt}
\end{table}

To demonstrate the impact of our approach on under-represented classes, we provide a plot showing the per-class performance gain (\lmp - \base on $y$-axis in \%) vs.~the absolute number of pixels for a given object category ($x$-axis, log-scale) in Fig.~\ref{fig:Effectiveness}. Positive values on $y$ indicate improvements (18/19 classes) and class labels attached to $x$ indicate increasing object class pixel label volume for categories from left to right. \Eg, \textit{motorcycle} is most underrepresented while \textit{road} is most present. The plot confirms how \lmp naturally improves on underrepresented object classes \emph{without} accessing the underlying class statistics. 

Another experiment we have run compares \baseOnly to \lmp: In order to match the result of \lmp, one has to \eg improve the worst 7 categories by 5\% each or the worst 10 categories by 3\% each, which we find a convincing argument for \lmp. 
Additionally, we illustrate the qualitative evolution of the semantic segmentation for two training images in Fig.~\ref{fig:qualitative}. 
Odd rows show segmentations obtained when training with conventional log-loss in \base, while even rows show the ones obtained using our loss max-pooling \lmp at an increasing number of iterations. As we can see, \lmp starts improving on under-represented classes sooner than standard log-loss (see, \eg traffic light and its pole on the middle right in the first image, and the car driver in the second image).
\begin{figure}[th]
	\centering
		\includegraphics[width=.8\columnwidth]{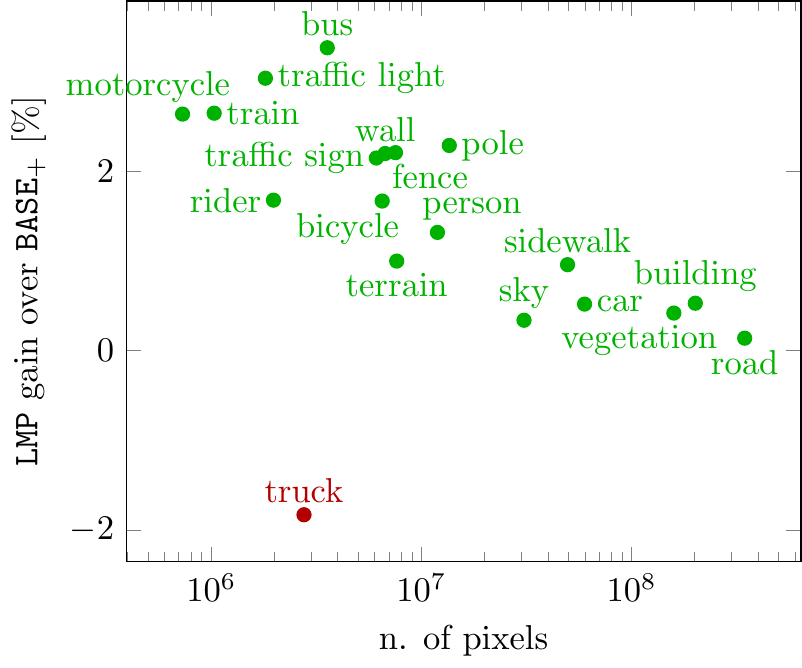}
        \vspace{-12pt}
        \caption{Improvement of \lmp over \base (18/19 classes) as a function of overall per-category pixel count on Cityscapes validation data.}
	\label{fig:Effectiveness}
    \vspace{-10pt}
\end{figure}
\begin{figure*}[h!]
    \newcolumntype{R}[2]{%
    >{\adjustbox{angle=#1,lap=\width-(#2)}\bgroup}%
    l%
    <{\egroup}%
}
    \setlength\tabcolsep{1.5pt}
    \centering
    \begin{tabular}{cccccccccc}
    \includegraphics[width=.1\textwidth]{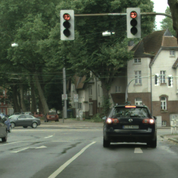}&&
    \includegraphics[width=.1\textwidth]{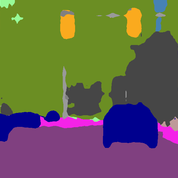}&
    \includegraphics[width=.1\textwidth]{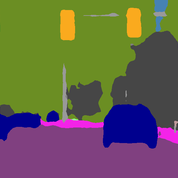}&
    \includegraphics[width=.1\textwidth]{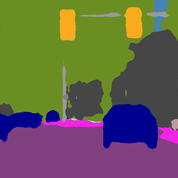}&
    \includegraphics[width=.1\textwidth]{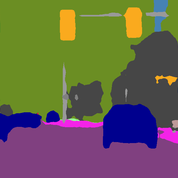}&
    \includegraphics[width=.1\textwidth]{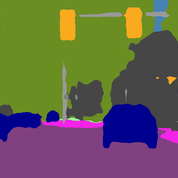}&
    \includegraphics[width=.1\textwidth]{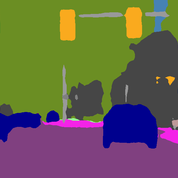}&
    \includegraphics[width=.1\textwidth]{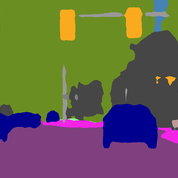}&
    \includegraphics[width=.1\textwidth]{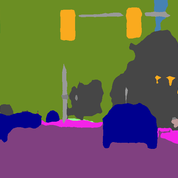}\\
    \includegraphics[width=.1\textwidth]{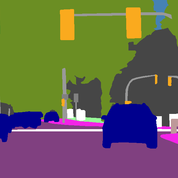}&&
    \includegraphics[width=.1\textwidth]{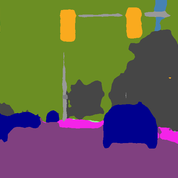}&
    \includegraphics[width=.1\textwidth]{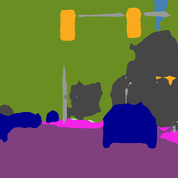}&
    \includegraphics[width=.1\textwidth]{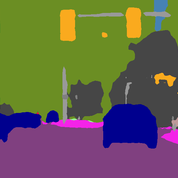}&
    \includegraphics[width=.1\textwidth]{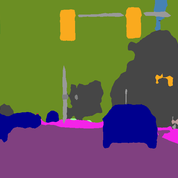}&
    \includegraphics[width=.1\textwidth]{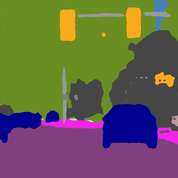}&
    \includegraphics[width=.1\textwidth]{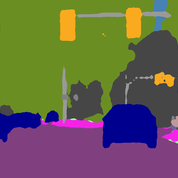}&
    \includegraphics[width=.1\textwidth]{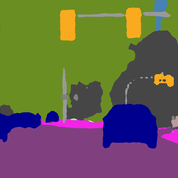}&
    \includegraphics[width=.1\textwidth]{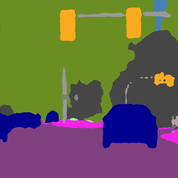}\\[3pt]
    \includegraphics[width=.1\textwidth]{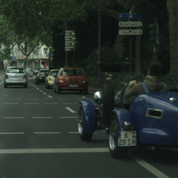}&&
    \includegraphics[width=.1\textwidth]{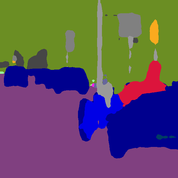}&
    \includegraphics[width=.1\textwidth]{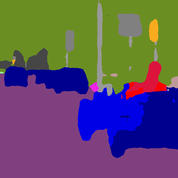}&
    \includegraphics[width=.1\textwidth]{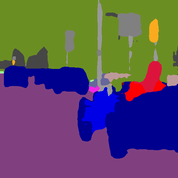}&
    \includegraphics[width=.1\textwidth]{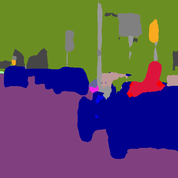}&
    \includegraphics[width=.1\textwidth]{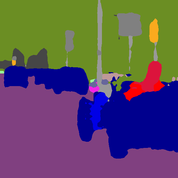}&
    \includegraphics[width=.1\textwidth]{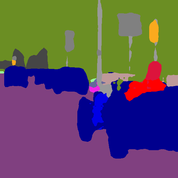}&
    \includegraphics[width=.1\textwidth]{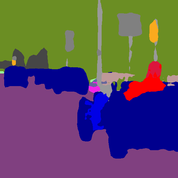}&
    \includegraphics[width=.1\textwidth]{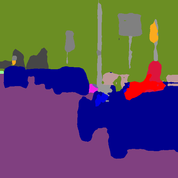}\\
    \includegraphics[width=.1\textwidth]{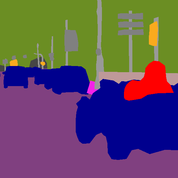}&&
    \includegraphics[width=.1\textwidth]{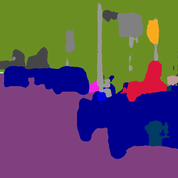}&
    \includegraphics[width=.1\textwidth]{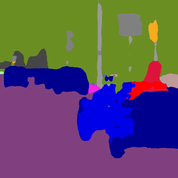}&
    \includegraphics[width=.1\textwidth]{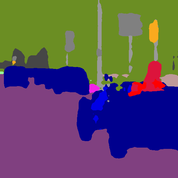}&
    \includegraphics[width=.1\textwidth]{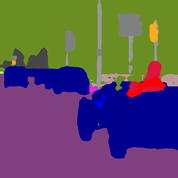}&
    \includegraphics[width=.1\textwidth]{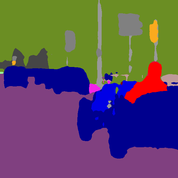}&
    \includegraphics[width=.1\textwidth]{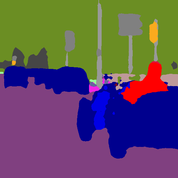}&
    \includegraphics[width=.1\textwidth]{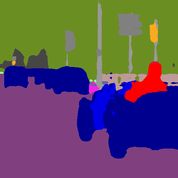}&
    \includegraphics[width=.1\textwidth]{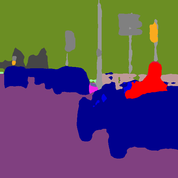}\\[3pt]

    \end{tabular}
    \vspace{-10pt}
    \caption{Evolution of semantic segmentation images during training. Left, we have pairs of original images (odd) and their ground-truth segmentations (even). The other images show semantic segmentations obtained by standard log-loss in \base (odd rows) and our loss max-pooling \lmp (even rows) after 20k, 40k, 60k, 80k, 100k, 120k, 140k, 165k training iterations.}
    \label{fig:qualitative}
\end{figure*}
Finally, we also report the individual per-class IoU scores in Tab.~\ref{tab:CityScapesClassIoU} for both, \base and \lmp, corresponding to the setting from Tab.~\ref{tab:CityscapesScores}. 
\begin{table*}
\vspace{-3pt}
    \setlength\tabcolsep{3pt}
\centering
{
    \subcaption{Cityscapes}\label{tab:CityScapesClassIoU}
\vspace{-4pt}

\resizebox{\textwidth}{!}{\begin{tabular}{lcccccccccccccccccccc}
  \toprule
   Method & Road& Sidewalk& Building& Wall& Fence& Pole& Traffic Light& Traffic Sign& Vegetation& Terrain& Sky& Person& Rider& Car& Truck& Bus& Train& Motorcycle& Bicycle& Mean \\
  \midrule
  \base Mean & 97.37 &  80.60 &  90.99 &  53.23 &  54.67 &  56.72 &  63.29 &  72.62 &  91.19 &  59.85 & 93.46 &  78.59 &  59.08 &  93.41 &  68.94 &  80.49 &  67.77 &  62.51 &  74.09 &  73.63\\
  \base Std.Dev. & 0.01 & 0.02 &  0.14 & 0.92 & 0.05 &  0.08 &  0.56 & 0.13 & 0.13 & 0.96 &  0.16 & 0.18 & 0.31& 0.12 & 0.26& 1.08 &  3.12 & 0.70  & 0.11 &  0.04\\
  \midrule
  \lmp Mean & 97.51 & 81.56 & 91.52 & 55.43 & 56.88 & 59.01 & 66.33 & 74.77 & 91.61 & 60.85 & 93.80 & 79.91 & 60.76 & 93.93 & 67.11 & 83.87 & 70.42 & 65.15 & 75.76 & 75.06 \\
  \lmp Std.Dev. & 0.05 & 0.32 & 0.06 & 0.80 & 0.68 & 0.31 & 0.27 & 0.21 & 0.07 & 0.27 & 0.09 & 0.08 & 0.29 & 0.01 & 0.77 & 0.44 & 0.53 & 0.44 & 0.14 & 0.09\\
  \bottomrule
\end{tabular}}
\vspace{.5em}
    \subcaption{Pascal VOC 2012}\label{tab:PascalClassIoU}
\vspace{-4pt}
\resizebox{\textwidth}{!}{\begin{tabular}{lcccccccccccccccccccccc}
  \toprule
   Method & Background &Aeroplane &Bicycle &Bird &Boat &Bottle &Bus &Car &Cat &Chair &Cow &Dining Table &Dog &Horse &Motorbike &Person &Potted Plant &Sheep &Sofa &Train &TV Monitor & Mean \\
  \midrule
  \base Mean & 92.69 &83.21 &78.46 &81.39 &67.95 &77.59 &92.14 &80.17 &86.99 &38.49 &80.86 &55.95 &81.03 &80.64 &79.28 &81.14 &61.74 &81.51 &47.76 &82.25 &72.68 &75.42 \\
  \base Std.Dev. & 0.00 &0.32 &0.04 &0.03 &0.14 &0.02 &0.12 &0.14 &0.08 &0.03 &0.06 &0.10 &0.02 &0.22 &0.06 &0.03 &0.35 &0.21 &0.24 &0.09 &0.24 &0.04 \\
  \midrule
 \lmp Mean & 92.84 &85.02 &79.62 &81.43 &69.99 &76.36 &92.38 &82.38 &89.43 &39.78 &82.70 &58.60 &82.85 &81.82 &80.17 &81.60 &61.22 &84.30 &45.44 &82.52 &71.70 &76.29 \\
 \lmp Std.Dev. & 0.02 &0.18 &0.03 &0.34 &0.26 &0.10 &0.17 &0.05 &0.10 &0.06 &0.17 &0.10 &0.05 &0.03 &0.07 &0.05 &0.14 &0.19 &0.04 &0.09 &0.10 &0.02 \\
  \bottomrule
\end{tabular}}
}
   \caption{Class-specific IoU scores on Cityscapes (with \textit{optimized tiling} during inference) and Pascal VOC 2012 validation datasets for our baseline (\base) and our proposed loss max-pooling (\lmp). All numbers in $[\%]$.}
   \vspace{-10pt}
\end{table*}

\subsection{Pascal VOC 2012}
We additionally assess the quality of our novel \lmp on the Pascal VOC 2012 segmentation benchmark dataset~\cite{Everingham2015}, comprising 20 object classes and a background class.
Images in this dataset are considerably smaller than the ones from the Cityscapes dataset so we increased the minibatch size to 4 (with crop sizes of $321\times 321$), using the (extended) training set with 10.582 images~\cite{Hariharan2011}. Testing was done on the validation set containing 1.449 images. We ran a total of 200k training iterations and fixed parameters $p=1.3$ and $m$ to account for $25\%$ of valid pixels per crop for our \lmp. During inference, images are evaluated at full scale, \ie~no special tiling mechanism is needed. We again report the mean IoU scores in Tab.~\ref{tab:PascalScores} (this time averaged after training iterations 180k, 190k and 200k), and list results from comparable state-of-the-art approaches~\cite{Wu2016,Chen2016} next to ours.
\begin{table}[th]
\centering
\resizebox{\columnwidth}{!}{\footnotesize
    \begin{tabular}{lc}
  \toprule
   Method & mean IoU \\
  \midrule
   \cite{Chen2016} RN-101 Base w/o COCO & 68.72\\
   \cite{Chen2016} RN-101 \& MSC \& Augm. \& ASPP  & 76.35\\   
   \cite{Chen2016} RN-101 \& MSC \& Augm. \& ASPP \& CRF & 77.69\\
   \cite{Wu2016} FCRN-101 \& Augm. & 73.41\\
   \cite{Wu2016} FCRN-152 \& Augm. & 73.32\\
   \cite{Wu2016} FCRN-101 \& Online BS \& Augm. & 74.80\\   
   \cite{Wu2016} FCRN-152 \& Online BS \& Augm. & 74.72\\
   \midrule
   \texttt{Our approaches - Resnet-101} & \\
   \baseOnly Augm. \& ASPP & 75.74 $\pm 0.05$\\
   \base    Augm. \& ASPP & 75.42 $\pm 0.04$ \\
   \cite{Eigen2015} Inverse median freq. \& Augm. \& ASPP & 74.93 $\pm 0.03$ \\
   Proposed loss only \& Augm. \& ASPP &  76.01 $\pm 0.01$ \\
   \lmp    Augm. \& ASPP & 76.29 $\pm 0.02$ \\
  \bottomrule
\end{tabular}}
   \caption{ResNet based results on Pascal VOC 2012 segmentation validation data. All numbers in $[\%]$.}
   \label{tab:PascalScores}
   \vspace{-15pt}
\end{table}
We can again obtain a considerable relative improvement over \base as well as comparable baselines from~\cite{Wu2016,Chen2016}. Our approach compares slightly worse ($-1.4\%$) with DeepLabV2's strongest variant, which however additionally uses multi-scale inputs (MSC) and refinements from a CRF (contributing $2.55\%$ and $1.34\%$ according to~\cite{Chen2016}, respectively) but only come with increased computational costs. Additionally, and as mentioned above, we consider both of these techniques as complementary to our contributions and plan to integrate them in future works. We finally notice that also for this dataset our new loss alone without the complimentary sampling strategy yields consistent improvements over \baseOnly and \base.

In Tab.~\ref{tab:PascalClassIoU}, we give side-by-side comparisons of per class IoU scores for \base and \lmp. Again, the majority of categories benefits from our approach, confirming its efficacy.

\section{Conclusions}\label{sec:Conclusion}
In this work we have introduced a novel approach to tackle imbalances in training data distributions, which do not occur only when we have under-represented classes (inter-class imbalance), but might occur also within the same class (intra-class imbalance).
We proposed a new loss function that performs a generalized max-pooling of pixel-specific losses. 
Our loss upper bounds the traditional one, which gives equal weight to each pixel contribution, and implicitly introduces an adaptive weighting scheme that biases the learner towards under-performing image parts. The space of weighting functions involved in the maximization can be shaped to enforce some desired properties. In this paper we focused on a particular family of weighting functions, enabling us to control the pixel selectivity and the extent of the supported pixels. We have derived explicit formulas for the outcome of the pooling operation under this family of pixel weighting functions, thus enabling the computation of gradients for training deep neural networks. We have experimentally validated the effectiveness of our new loss function and showed consistently improved results on standard benchmark datasets for semantic segmentation.

{\small
\myparagraph{Acknowledgements.} We gratefully acknowledge financial support from project \textit{DIGIMAP}, funded under grant \#860375 by the Austrian Research Promotion Agency (FFG).
}

\appendix

\onecolumn

\section{Proofs and Auxiliary Results}\label{sec:proofs}

\begin{proof}[Proof of Thm.~\ref{thm:main_theorem}]
    Let $\alpha^\star=\max\{\alpha\in\mathbb R\,:\,\eta(\alpha)=0\}$ and let $\set J^\star=\{u\in\set I:\eta(\ell_{\hat y y}(u))>
\alpha^\star\}$.
    We start proving that $\set J^\star=\set J^*$. 
    If $u\in\set J^*$, then by definition of $\set J^*$ we have $\eta(\ell_{\hat y y}(u))>0$, which 
    implies $\ell_{\hat y y}(u)>\alpha^\star$ by Proposition~\ref{prop:contrapp} (take $\alpha_1=\ell_{\hat y y}(u)$ and $\alpha_2=\alpha^\star$).
    Consequently, $u\in\set J^\star$ and, therefore, $\set J^*\subseteq\set J^\star$.
    If $u\in\set J^\star$, then by definition of $\set J^\star$ we have $\eta(\ell_{\hat y y}(u))>\alpha^\star$, which implies $\eta(\ell_{\hat y y}(u))>0$ since $\alpha^\star\geq 0$ hold by Proposition~\ref{prop:div-by-zero}. 
    Therefore, $u\in\set J^*$. 
    So, $\set J^\star\subseteq\set J^*$. We have thus proved that $\set J^\star=\set J^*$.
    
    We obtain $\alpha^*$, as given in the theorem, by solving equation $\eta(\alpha^\star)=0$ for variable $\alpha^\star$, after having replaced $\set J^\star$ with the equivalent $\set J^*$. The equation admits a unique solution, because $|\set J^*|=|\set J^\star|<m$ by Proposition~\ref{prop:div-by-zero}. Accordingly, $\alpha^*=\alpha^\star$. 
    Then, by Proposition~\ref{prop:alpha-sol} we have that $\lambda^*=|\ell_{\hat y y}-\alpha^*|_+$ is a solution to~\eqref{eq:dual}, 
    from which we derive 
    \begin{multline*}
        L_{\set W}(\hat y, y)=g(\lambda^*)
        =\tau\langle\lambda^*\rangle+\gamma \Vert\ell_{\hat y y}-\lambda^*\Vert_q
        =\tau\langle\lambda^*\rangle+\gamma (\langle\ell_{\hat y y}^q\rangle_{\overline{\set J}^*}+\alpha^{*q}|\set J^*|)^{1/q}\\
        =\tau\langle\lambda^*\rangle+\gamma (m\alpha^{*q}-\underbrace{\eta(\alpha^*)}_{=0})^{1/q}
        =\tau(\langle\ell_{\hat y y}\rangle_{\set J^*}-|\set J^*|\alpha^*)+\underbrace{\gamma m^{1/q}}_{=\tau m}\alpha^*
        =\tau\left[\langle\ell_{\hat y y}\rangle_{\set J^*}+(m-|\set J^*|)\alpha^*)\right]\,.
    \end{multline*}

    As for $w^*$, we have that $L_\set W(\hat y,y)\geq w^*\cdot\ell_{\hat y, y}$ holds in general.
    Now, if $\alpha^*=0$, then $L_{\set W}(\hat y,y)\geq w^*\cdot\ell_{\hat y y}=\tau\langle\ell_{\hat y y}\rangle_{\set J^*}=L_{\set W}(\hat y, y)$.
    If $\alpha^*>0$, then
    \[
            L_{\set W}(\hat y,y)\geq w^*\cdot\ell_{\hat y y}=\tau\langle\ell_{\hat y y}\rangle_{\set J^*}+\frac{\tau}{(\alpha^*)^{q-1}}\left\langle\ell^q_{\hat y y} \right\rangle_{\overline{\set J}^*}
            =\tau\left[\langle\ell_{\hat y y}\rangle_{\set J^*}+(m-|\set J^*|)\alpha^*\right]=L_{\set W}(\hat y,y)\,,
    \]
    where the last equality follows from the observation that $(m-|\set J^*|)\alpha^{*q}=\langle\ell_{\hat y y}^{q}\rangle_{\overline{\set J}^*}$, by definition of $\alpha^*$. Hence, $w^*$ is an optimal solution to the maximization in~\eqref{eq:new_loss}.
\end{proof}

\begin{proposition}
    Let $1\leq q<\infty$ and $1\leq m\leq n$. If $\lambda^*$ is a solution to~\eqref{eq:dual}, then $\alpha^*=m^{-1/q}\Vert\ell_{\hat y y}-\lambda^*\Vert_q$ is a root of $\eta$.
    If $\alpha^*=\max\{\alpha\in\mathbb R:\eta(\alpha)=0\}$, then $\lambda^*=|\ell_{\hat y y}-\alpha^*|_+$ is a solution to~\eqref{eq:dual}.
    \label{prop:alpha-sol}
\end{proposition}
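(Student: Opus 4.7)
The plan is to treat~\eqref{eq:dual} as a convex minimization of $g(\lambda)$ over the cone $\{\lambda:\lambda\succeq 0\}$ and apply KKT conditions, which are both necessary and sufficient here because $g$ is convex (a linear functional plus a norm composed with an affine map). A preliminary observation I would establish is that any minimizer satisfies $\lambda^*\preceq\ell_{\hat y y}$: if $\lambda^*(u)>\ell_{\hat y y}(u)$ somewhere, clipping $\lambda^*(u)$ down to $\ell_{\hat y y}(u)$ strictly decreases $\tau\langle\lambda\rangle$ while weakly decreasing $\gamma\Vert\ell_{\hat y y}-\lambda\Vert_q$ (the $u$-th summand of $\Vert r\Vert_q^q$ drops to zero), contradicting optimality. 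This ensures $r^*:=\ell_{\hat y y}-\lambda^*\succeq 0$.

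For the first direction, assume $r^*\neq 0$ so $\Vert r\Vert_q$ is differentiable at $r^*$. The KKT stationarity/complementarity conditions read, for every pixel $u$,
\[
\tau-\gamma\Vert r^*\Vert_q^{1-q}(r^*(u))^{q-1}\geq 0,\qquad \lambda^*(u)\bigl[\tau-\gamma\Vert r^*\Vert_q^{1-q}(r^*(u))^{q-1}\bigr]=0.
\]
Setting $\alpha^*:=(\tau/\gamma)^{1/(q-1)}\Vert r^*\Vert_q$ and using $m=(\gamma/\tau)^p$ together with the identity $p(q-1)=q$ (which follows from $1/p+1/q=1$) gives $\alpha^*=m^{-1/q}\Vert r^*\Vert_q$. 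The system is then equivalent to $r^*(u)\leq\alpha^*$ with equality whenever $\lambda^*(u)>0$, forcing $\lambda^*=(\ell_{\hat y y}-\alpha^*)_+$ and $r^*=\min\{\ell_{\hat y y},\alpha^*\}$ pointwise. Substituting gives $\Vert r^*\Vert_q^q=|\set J_{\alpha^*}|\alpha^{*q}+\langle\ell_{\hat y y}^q\rangle_{\overline{\set J}_{\alpha^*}}$, while by construction $\Vert r^*\Vert_q^q=m\alpha^{*q}$; equating the two is exactly $\eta(\alpha^*)=0$.

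For the converse, given $\alpha^*$ the largest root of $\eta$, I would set $\lambda^*:=(\ell_{\hat y y}-\alpha^*)_+$ and verify KKT directly. From $\eta(\alpha^*)=0$ one recovers $\Vert r^*\Vert_q=m^{1/q}\alpha^*$, whence
\[
\gamma\Vert r^*\Vert_q^{1-q}(r^*(u))^{q-1}=\gamma\,m^{(1-q)/q}(r^*(u))^{q-1}
\]
evaluates to $\tau$ exactly when $r^*(u)=\alpha^*$ (i.e., where $\lambda^*(u)>0$), and is $\leq\tau$ when $\lambda^*(u)=0$ since then $r^*(u)\leq\alpha^*$ and $q-1\geq 0$. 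Convexity of $g$ promotes this KKT point to a global minimizer.

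The main obstacle will be the degenerate case $\alpha^*=0$ (equivalently $r^*=0$), where $\Vert\cdot\Vert_q$ is non-differentiable and the smooth KKT derivation fails. Here one must work with the subdifferential of $\Vert\cdot\Vert_q$ at the origin, which is the unit ball of $\Vert\cdot\Vert_p$, and check that a compatible multiplier exists only in the saturated regime $m=n$. This is precisely why the ``largest root'' hypothesis matters in the converse: because losses are non-negative, $\alpha=0$ is always a root of $\eta$, but the corresponding $\lambda^*=\ell_{\hat y y}$ is only optimal when $m=n$; in all other cases the forward direction forces $\alpha^*>0$, consistent with selecting the largest root. A short separate check handles the boundary $q=1$ (i.e., $p=\infty$), which is degenerate in that the feasible set of weights effectively collapses to the uniform weighting.
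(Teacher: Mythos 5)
Your overall strategy---working the KKT conditions of the dual directly, after the preliminary clipping observation that any minimizer satisfies $\lambda^*\preceq\ell_{\hat y y}$---is sound, and it is essentially an inlined version of what the paper distributes over Propositions~\ref{prop:KKT-diff}, \ref{prop:KKT} and~\ref{prop:alpha}: the paper first characterizes dual solutions as fixed points of~\eqref{eq:KKT} (the same stationarity-plus-complementarity computation you perform), then translates fixed points into roots of $\eta$, and finally uses the ``argmax of $\Vert\ell_{\hat y y}-\lambda\Vert_q$ over fixed points'' device to absorb the non-differentiable point $\lambda=\ell_{\hat y y}$ uniformly. Your forward direction and your converse for $\alpha^*>0$ are correct in substance; note one algebra slip in the converse: after $\Vert r^*\Vert_q=m^{1/q}\alpha^*$ the quantity should read $\gamma m^{(1-q)/q}\bigl(r^*(u)/\alpha^*\bigr)^{q-1}$ rather than $\gamma m^{(1-q)/q}\bigl(r^*(u)\bigr)^{q-1}$; with the $\alpha^{*\,(1-q)}$ factor restored, your conclusion that the expression equals $\tau$ exactly when $r^*(u)=\alpha^*$ is right.

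The genuine problem is your resolution of the degenerate case. You assert that ``$\lambda^*=\ell_{\hat y y}$ is only optimal when $m=n$'' and that otherwise the largest root is automatically positive; both halves are false. Take all losses zero, or more generally fewer than $m$ pixels with positive loss: then for $\alpha$ just above $0$ one has $\eta(\alpha)=(m-k)\alpha^q>0$ with $k$ the number of positive losses, Proposition~\ref{prop:monotonicity} propagates positivity upward, so the largest root is $\alpha^*=0$, and the converse must certify $\lambda^*=\ell_{\hat y y}$ as optimal even though $m<n$. The subdifferential computation you propose actually yields the right criterion---a multiplier $s$ with $\Vert s\Vert_p\leq 1$ and $s(u)=\tau/\gamma$ on the support of $\ell_{\hat y y}$ exists iff $k(\tau/\gamma)^p=k/m\leq 1$, i.e.\ iff $k\leq m$ (cf.\ Proposition~\ref{prop:only_diff})---and the missing link you still need is that ``largest root equal to $0$'' implies exactly this bound: if $k>m$ then $\eta$ is negative just above $0$ and positive for $\alpha$ beyond the maximal loss, hence has a positive root by continuity of $\eta$. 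As written, your sketch would wrongly conclude that the converse fails whenever $m<n$ and $\alpha^*=0$. Once this case is repaired, and the boundary $q=1$ is checked separately as you anticipate, the proof is complete.
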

\begin{proof}
Let $\lambda^*$ be a solution to~\eqref{eq:dual}. It follows from Propositions~\ref{prop:KKT} and~\ref{prop:alpha} that $\alpha^*=m^{-1/q}\Vert\ell_{\hat y y}-\lambda^*\Vert_q$ is a root of $\eta$.

Let $\alpha^*=\max\{\alpha\in\mathbb R:\eta(\alpha)=0\}$ and $\lambda^*=|\ell_{\hat y y}-\alpha^*|_+$. 
Then
\[
    m^{-1/q}\Vert\ell_{\hat y y}-\lambda^*\Vert_q=m^{-1/q}(\langle\ell_{\hat y y}^q\rangle_{\overline{\set J}^*}+\alpha^{*q}|\set J^*|)^{1/q}=m^{-1/q}(m\alpha^{*q}-\underbrace{\eta(\alpha^*)}_{=0})^{1/q}=\alpha^*.
\]
Now, let $\lambda^\star\in\argmax\{\Vert\ell_{\hat y y}-\lambda\Vert_q\,:\,\lambda\text{ satisfies \eqref{eq:KKT}}\}$ and let $\alpha^\star=m^{-1/q}\Vert\ell_{\hat y y}-\lambda^\star\Vert_q$. By Proposition~\ref{prop:alpha} we have that $\lambda^*$ satisfies~\eqref{eq:KKT}, hence 
\[
    \alpha^*=m^{-1/q}\Vert\ell_{\hat y y}-\lambda^*\Vert_q\leq m^{-1/q}\Vert\ell_{\hat y y}-\lambda^\star\Vert_q=\alpha^\star\,.
\]
By Proposition~\ref{prop:alpha}, $\alpha^\star$ is a root of $\eta$. However,
this implies  $\alpha^*=\alpha^\star$ by definition of $\alpha^*$ and, therefore, $\lambda^*=\lambda^\star$. Finally,
by Proposition~\ref{prop:KKT} we conclude that $\lambda^*$ is a solution to~\eqref{eq:dual}.
\end{proof}

\begin{theorem}
    Let $p=1$ and $1\leq m \leq n$. Then
    \begin{equation}
        L_{\set W}(\hat y,y)=\tau[\langle\ell_{\hat y y}\rangle_{\set J^*} + (m-|\set J^*|)\alpha^*]\,,
        \label{eq:special_sol}
    \end{equation}
    where $\set J^*\in\argmax\{\langle\ell_{\hat y y}\rangle_{\set J}\,:\,\set J\subseteq\set I,|\set J|=\lfloor m\rfloor\}$,
    \ie $\set J^*$ contains the pixels with the $\lfloor m\rfloor$ highest losses, 
    and $\alpha^*=\Vert\ell_{\hat y y}\Vert_{\infty,\overline{\set J}^*}$, \ie it corresponds to the highest loss in $\overline{\set J}^*$  or zero if $\overline{\set J}^*$ is empty. 
    Moreover, 
    \[
w^*(u)=
\begin{cases}
    \tau&\text{if }u\in\set J^*\\
    \tau(m-\lfloor m\rfloor)\mu(u)&\text{if }u\in\set J^+\\
    0&\text{otherwise}
\end{cases}
    \]
    is an optimal solution for the maximization in~\eqref{eq:new_loss}, for any probability distribution $\mu$ defined over $\set J^+=\{u\in\set I:\ell_{\hat y y}(u)=\alpha^*\}\setminus\set J^*$.
    \label{thm:special}
\end{theorem}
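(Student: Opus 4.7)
My plan is to reduce the $p=1$ maximization to a bounded linear program and solve it by a greedy/exchange argument, using LP duality as a clean optimality certificate.

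First I would specialize the constraints. Since $p=1$ gives $q=\infty$, the bound $\gamma=n^{-1/q}$ collapses to $\gamma=1$, and the relation $m=(\gamma/\tau)^p$ simplifies to $\tau=1/m$. Thus $\set W=\{w\in\mathbb R^\set I:\langle|w|\rangle\leq 1,\ \Vert w\Vert_\infty\leq\tau\}$. Because $\ell_{\hat y y}\succeq 0$ by assumption, flipping the sign of any negative coordinate of a candidate $w$ only weakly increases the objective while preserving both norm constraints, so I may WLOG restrict to $0\preceq w\preceq\tau$ with $\langle w\rangle\leq 1$. The problem thus becomes maximization of a linear functional with non-negative coefficients over a bounded polytope.

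Next I would identify the optimal structure via exchange. Each coordinate is capped at $\tau=1/m$ and the total budget is $m\tau=1$. A one-line swap shows that whenever $w(u)<\tau$ and $w(v)>0$ with $\ell_{\hat y y}(u)\geq\ell_{\hat y y}(v)$, transferring mass from $v$ to $u$ cannot decrease the objective. Iterating, every optimum saturates the cap at the $\lfloor m\rfloor$ pixels with the highest losses (ties at the boundary account for the $\argmax$-style definition of $\set J^*$), consuming budget $\lfloor m\rfloor\tau$ and leaving residual $1-\lfloor m\rfloor\tau=\tau(m-\lfloor m\rfloor)$. The same exchange forces that residual onto pixels with the maximum remaining loss $\alpha^*=\Vert\ell_{\hat y y}\Vert_{\infty,\overline{\set J}^*}$, i.e.\ onto $\set J^+$; any probability distribution $\mu$ on $\set J^+$ therefore gives an optimal allocation, which is exactly the $w^*$ in the statement. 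Evaluating $w^*\cdot\ell_{\hat y y}=\tau\langle\ell_{\hat y y}\rangle_{\set J^*}+\tau(m-\lfloor m\rfloor)\alpha^*$ recovers \eqref{eq:special_sol} after noting $|\set J^*|=\lfloor m\rfloor$ and that $\ell_{\hat y y}\equiv\alpha^*$ on $\set J^+$. Feasibility of $w^*$ is immediate: $\langle w^*\rangle=\tau\lfloor m\rfloor+\tau(m-\lfloor m\rfloor)=1=\gamma$ and $w^*(u)\leq\tau$ since $(m-\lfloor m\rfloor)\mu(u)\leq 1$.

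To certify optimality independently of the exchange argument (and so handle ties and boundary cases uniformly), I would write the LP dual with multiplier $\mu\geq 0$ for $\langle w\rangle\leq 1$ and $\lambda\succeq 0$ for $w\preceq\tau$: the dual is $\min\{\mu+\tau\langle\lambda\rangle:\mu+\lambda(u)\geq\ell_{\hat y y}(u),\ \mu\geq 0,\ \lambda\succeq 0\}$. The choice $\mu=\alpha^*$, $\lambda=(\ell_{\hat y y}-\alpha^*)_+$ is dual-feasible and evaluates to $\alpha^*+\tau\sum_{u\in\set J^*}(\ell_{\hat y y}(u)-\alpha^*)=\tau[\langle\ell_{\hat y y}\rangle_{\set J^*}+(m-|\set J^*|)\alpha^*]$, matching the primal value and closing the duality gap.

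The main obstacle I anticipate is bookkeeping at the degenerate cases rather than the core argument: when $m\in\mathbb Z$ so that $\set J^+$ receives zero residual mass; when $\overline{\set J}^*=\emptyset$ so that the convention $\alpha^*=0$ is consistent with a vanishing residual term; and when losses tie across the $\lfloor m\rfloor$-th ranked pixel so that $\set J^*$ is non-unique. In each case the value of $L_\set W$ is unaffected, but the exchange argument (or equivalently the choice of active dual constraints) must be stated to cover them. One could alternatively derive the $p=1$ result as the $p\to 1^+$ limit of Thm.~\ref{thm:main_theorem}, but the direct LP route above avoids delicate limiting arguments on the exponent $q-1$ in the formula for $w^*$.
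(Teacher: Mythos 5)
Your proposal is correct, and it reaches the result by a genuinely different certification of optimality than the paper. The paper's proof of Thm.~\ref{thm:special} stays entirely in the primal: it first argues any maximizer $w^\star$ can be taken nonnegative with $\Vert w^\star\Vert_1=\gamma$, then supposes $w^\star\cdot\ell_{\hat y y}>w^*\cdot\ell_{\hat y y}$ and derives a contradiction by splitting $\set I$ into the sets $\set A^+$, $\set A^-$ where $w^\star$ exceeds or falls below $w^*$, and bounding $(w^\star-w^*)\cdot\ell_{\hat y y}\leq\alpha^*\langle w^\star-w^*\rangle=0$. You instead exhibit an explicit feasible point of the LP dual, $(\mu,\lambda)=\bigl(\alpha^*,(\ell_{\hat y y}-\alpha^*)_+\bigr)$, and close the gap by direct evaluation using $\tau m=\gamma=1$; this computation checks out, since $(\ell_{\hat y y}-\alpha^*)_+$ vanishes off $\set J^*$ and equals $\ell_{\hat y y}-\alpha^*$ on it. Both arguments ultimately rest on the same threshold fact (losses on the support of $w^*$ are at least $\alpha^*$, losses off it at most $\alpha^*$), and the paper's key inequality is weak duality in disguise; but your certificate is the same dual variable $\lambda^*=(\ell_{\hat y y}-\alpha^*)_+$ that appears in Thm.~\ref{thm:main_theorem} for $p>1$, so your route unifies the two cases and disposes of the ties and degenerate cases ($m$ integer, $\overline{\set J}^*=\emptyset$) uniformly, whereas the paper's argument is more elementary and self-contained. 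One small caution: your exchange step's assertion that \emph{every} optimum saturates the cap on the top-$\lfloor m\rfloor$ pixels fails under ties, but since the theorem only claims that the stated $w^*$ \emph{is an} optimal solution and your dual certificate delivers exactly that, this does not affect correctness.
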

\begin{proof}
    Assume $w^\star$ to be the maximizer in~\eqref{eq:new_loss}, \ie $L_{\set W}(\hat y,y)=w^\star\cdot\ell_{\hat y y}$. 
    Then it has to be nonnegative and should satisfy $\Vert w^\star\Vert_1=\gamma$. 
Otherwise, we could construct $w^\dag=\gamma |w^\star|/\Vert w^\star\Vert_1$, which would satisfy $w^\dag\cdot \ell_{\hat y y}>w^\star\cdot\ell_{\hat y y}$ contradicting $L_{\set W}(\hat y,y)=w^\star\cdot\ell_{\hat y y}$.

    Now, from~\eqref{eq:new_loss} and the definition of $w^*$ we can derive
    \[
        L_{\set W}(\hat y,y)\geq w^*\cdot\ell_{\hat y y}=\tau[\langle\ell_{\hat y y}\rangle_{\set J^*}+(m-\underbrace{\lfloor m\rfloor}_{=|\set J^*|})\underbrace{\langle\mu\ell_{\hat y y}\rangle_{\set J^+}}_{=\alpha^*}].
\]
Assume by contradiction that strict inequality holds, or in other terms that $w^\star\cdot\ell_{\hat y y}>w^*\cdot\ell_{\hat y y}$.
Let $\set A^+=\{u\in\set I:w^\star(u)>w^*(u)\}$ and $\set A^-=\{u\in\set I:w^\star(u)<w^*(u)\}$.
Note that $\set A^+\neq\emptyset$ and $w^\star\neq w^*$, otherwise $w^\star\cdot\ell_{\hat y y}\leq w^*\cdot\ell_{\hat y y}$ holds yielding a contradiction, and $\set A^+\subseteq\overline{\set J}^*$, because $w^\star$ is upper bounded by $\tau$ and $w^*(u)=\tau$ for $u\in\set J^*$.
It follows by definition of $\alpha^*$ that $\ell_{\hat y y}(u)\leq\alpha^*$ for any $u\in\set A^+$.
Additionally, $\set A^-\neq\emptyset$, otherwise $\Vert w^\star\Vert_1 > \Vert w^*\Vert_1=\gamma$ holds contradicting $w^\star\in\set W$,
and necessarily $\set A^-\subseteq\set J^*\cup\set J^+$ because $w^\star$ is lower bounded by $0$ and $w^*(u)=0$ for $u\notin\set A^-$. 
It follows by definition of $\set J^*$ and $\set J^+$ that $\ell_{\hat y y}(u)\geq\alpha^*$ for any $v\in\set A^-$.
But then
\[
    (w^\star-w^*)\cdot\ell_{\hat y y}=\underbrace{\langle(w^\star-w^*)\ell_{\hat y y}\rangle_{\set A^+}}_{\leq \alpha^*\langle w^\star-w^*\rangle_{\set A^+}} + \underbrace{\langle(w^\star-w^*)\ell_{\hat y y}\rangle_{\set A^-}}_{\leq\alpha^*\langle w^\star-w^*\rangle_{\set A^-}}\leq \alpha^*\langle w^\star -w^*\rangle =\alpha^*(\underbrace{\langle w^\star\rangle}_{=\gamma}-\underbrace{\langle w^*\rangle}_{=\gamma})=0\,,
\]
yielding a contradiction. Hence, $L_{\set W}(\hat y,y)=w^*\cdot\ell_{\hat y y}$, \ie $w^*$ is an optimal solution for the maximization in~\eqref{eq:new_loss}, and~\eqref{eq:special_sol} holds.
\end{proof}

\begin{proposition}
    Let $1\leq q<\infty$ and $1\leq m\leq n$. If $\lambda$ satisfies~\eqref{eq:KKT}, then $\alpha=m^{-1/q}\Vert\ell_{\hat y y}-\lambda\Vert_q$ is a root of $\eta$. 
    If $\alpha$ is a root of $\eta$, then $\lambda=|\ell_{\hat y y}-\alpha|_+$ satisfies \eqref{eq:KKT}.
    \label{prop:alpha}
\end{proposition}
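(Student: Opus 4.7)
The plan is to observe that both the KKT fixed-point equation~\eqref{eq:KKT} and the root equation $\eta(\alpha)=0$ encode essentially the same soft-thresholding relationship $\lambda(u)=(\ell_{\hat y y}(u)-\alpha)_+$ with a single scalar threshold $\alpha$, and that the two sides of the proposition reduce to verifying a single algebraic identity about $\Vert\ell_{\hat y y}-\lambda\Vert_q^q$.

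First I would handle the forward implication. Assume $\lambda$ satisfies~\eqref{eq:KKT} and set $\alpha=m^{-1/q}\Vert\ell_{\hat y y}-\lambda\Vert_q$. Note that $\alpha\geq 0$ since the $q$-norm is non-negative. The fixed-point equation then reads $\lambda(u)=(\ell_{\hat y y}(u)-\alpha)_+$, so I split on $\set J_\alpha$ vs.\ $\overline{\set J}_\alpha$: for $u\in\set J_\alpha$ we get $\ell_{\hat y y}(u)-\lambda(u)=\alpha$, while for $u\in\overline{\set J}_\alpha$ we get $\ell_{\hat y y}(u)-\lambda(u)=\ell_{\hat y y}(u)$. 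Summing the $q$-th powers yields
\[
\Vert\ell_{\hat y y}-\lambda\Vert_q^q = |\set J_\alpha|\alpha^q + \langle\ell_{\hat y y}^q\rangle_{\overline{\set J}_\alpha}.
\]
On the other hand, by construction $\Vert\ell_{\hat y y}-\lambda\Vert_q^q=m\alpha^q$. Equating the two expressions and rearranging gives exactly $(m-|\set J_\alpha|)\alpha^q-\langle\ell_{\hat y y}^q\rangle_{\overline{\set J}_\alpha}=0$, i.e.\ $\eta(\alpha)=0$.

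For the reverse implication, assume $\eta(\alpha)=0$ and put $\lambda=(\ell_{\hat y y}-\alpha)_+$. I need to show $\lambda=(\ell_{\hat y y}-m^{-1/q}\Vert\ell_{\hat y y}-\lambda\Vert_q)_+$, which by the specific form of $\lambda$ reduces to proving $m^{-1/q}\Vert\ell_{\hat y y}-\lambda\Vert_q=\alpha$. Repeating the same case split as above (on $\set J_\alpha$ and $\overline{\set J}_\alpha$) gives $\Vert\ell_{\hat y y}-\lambda\Vert_q^q=|\set J_\alpha|\alpha^q+\langle\ell_{\hat y y}^q\rangle_{\overline{\set J}_\alpha}$, and substituting the hypothesis $\langle\ell_{\hat y y}^q\rangle_{\overline{\set J}_\alpha}=(m-|\set J_\alpha|)\alpha^q$ collapses this to $m\alpha^q$; taking $q$-th roots and dividing by $m^{1/q}$ yields the desired equality.

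The only non-routine point is making sure the case split is genuinely well-posed, i.e.\ that the threshold used in the definition of $\set J_\alpha$ matches the threshold appearing in $\lambda$; both directions are designed precisely to make this match, so once the bookkeeping is arranged this way the argument is a single short identity used twice. A minor subtlety worth flagging is the sign of $\alpha$: in the reverse direction I should note (or assume, consistent with the non-negativity of the losses and the rest of the appendix) that we work with $\alpha\geq 0$ so that $\alpha^q$ is well-defined, which is automatic in the forward direction.
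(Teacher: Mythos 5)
Your proof is correct and follows essentially the same route as the paper's: both directions reduce to the identity $\Vert\ell_{\hat y y}-\lambda\Vert_q^q=|\set J_\alpha|\alpha^q+\langle\ell_{\hat y y}^q\rangle_{\overline{\set J}_\alpha}$ obtained from the case split on $\set J_\alpha$ versus $\overline{\set J}_\alpha$, read forward in one direction and backward in the other. Your remark on the sign of $\alpha$ in the reverse direction is a fair point that the paper leaves implicit.
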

\begin{proof}
    Let $\alpha=m^{-1/q}\Vert\ell_{\hat y y}-\lambda\Vert_q$. If $\lambda$ satisfies~\eqref{eq:KKT}, then $\lambda=|\ell_{\hat y y}-\alpha|_+$. By substituting it back into $\alpha$ we obtain:
    \[
        \begin{aligned}
            \alpha^q&=m^{-1}\Vert\ell_{\hat y y}-\lambda\Vert^q_q\\
            m\alpha^q&=\left[|\set J_\alpha| \alpha^q + \langle \ell^q_{\hat y y}\rangle_{\overline{\set J}_\alpha} \right]\\
            0&=(m-|\set J_{\alpha}|)\alpha^q-\langle\ell^q_{\hat y y}\rangle_{\overline{\set J}_\alpha}=\eta(\alpha)\,.
        \end{aligned}
    \]
    Hence, $\alpha$ is a root of $\eta$.

    Let $\alpha$ be a root of $\eta$ and let $\lambda=|\ell_{\hat y y}-\alpha|_+$. 
    By following the previous relation bottom-up, we obtain $\alpha=m^{-1/q}\Vert\ell_{\hat y y}-\lambda\Vert_q$, and by substituting it back into $\lambda$ we obtain~\eqref{eq:KKT}.
\end{proof}

\begin{proposition}
    Let $1\leq q<\infty$ and $1\leq m\leq n$. If $\lambda^*$ is a solution to~\eqref{eq:dual}, then it satisfies~\eqref{eq:KKT}.
    If $\lambda^*\in\argmax\{\Vert\ell_{\hat yy}-\lambda\Vert_q\,:\,\lambda \text{ satisfies \eqref{eq:KKT}}\}$, then it is a solution to~\eqref{eq:dual}.
    \label{prop:KKT}
\end{proposition}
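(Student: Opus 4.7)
The plan is to treat both implications via KKT conditions for the convex minimization $\min\{g(\lambda) : \lambda\succeq 0\}$, where $g(\lambda) := \tau\langle\lambda\rangle + \gamma\Vert\ell_{\hat y y}-\lambda\Vert_q$. Convexity of $g$ and of the feasible cone make the KKT conditions both necessary and sufficient.

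For the forward direction, I would first observe that any minimizer $\lambda^*$ must satisfy $\lambda^*\preceq\ell_{\hat y y}$: decreasing any component $\lambda^*(u)$ exceeding $\ell_{\hat y y}(u)$ (keeping nonnegativity) strictly reduces both $\tau\langle\lambda\rangle$ and $\gamma\Vert\ell_{\hat y y}-\lambda\Vert_q$. In the smooth regime $h := \Vert\ell_{\hat y y}-\lambda^*\Vert_q > 0$, I compute $\nabla g(\lambda^*)(u) = \tau - \gamma(\ell_{\hat y y}(u)-\lambda^*(u))^{q-1}/h^{q-1}$ and invoke stationarity/complementary-slackness, $\nabla g(\lambda^*)(u)\geq 0$ with $\nabla g(\lambda^*)(u)\lambda^*(u)=0$. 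Using the identity $(\tau/\gamma)^{1/(q-1)} = m^{-1/q}$, which follows from $\gamma = n^{-1/q}$, $\tau=\gamma m^{-1/p}$, and $1/p+1/q=1$, these conditions collapse to $\lambda^*(u) = (\ell_{\hat y y}(u) - m^{-1/q}h)_+$ for every $u$, which is precisely~\eqref{eq:KKT}. The degenerate case $h=0$ forces $\lambda^*=\ell_{\hat y y}$, and then~\eqref{eq:KKT} reduces to $(\ell_{\hat y y}-0)_+=\ell_{\hat y y}$, which holds trivially.

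For the reverse direction, coercivity of $g$ on $\{\lambda\succeq 0\}$ (via $g(\lambda)\geq\tau\langle\lambda\rangle$) plus continuity guarantees that some minimizer $\bar\lambda$ exists, and by the forward direction $\bar\lambda$ satisfies~\eqref{eq:KKT}. The key step is to show that $g$ is in fact \emph{constant} on the set of $\lambda$ satisfying~\eqref{eq:KKT}. By Prop.~\ref{prop:alpha}, every such $\lambda$ has the form $(\ell_{\hat y y}-\alpha)_+$ for a root $\alpha$ of $\eta$, and substituting back (using $\eta(\alpha)=0$ to rewrite $\Vert\ell_{\hat y y}-\lambda\Vert_q^q$ as $m\alpha^q$, combined with $\gamma m^{1/q}=\tau m$) yields the compact expression $g((\ell_{\hat y y}-\alpha)_+) = \tau[\langle\ell_{\hat y y}\rangle_{\set J_\alpha} + (m-|\set J_\alpha|)\alpha]$. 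For two distinct roots $\alpha_1<\alpha_2$ with $\set S := \set J_{\alpha_1}\setminus\set J_{\alpha_2}$, subtracting the two root identities gives $(m-|\set J_{\alpha_1}|)(\alpha_2^q-\alpha_1^q) = \langle\ell_{\hat y y}^q - \alpha_2^q\rangle_{\set S}$; the LHS is nonnegative while each summand on the RHS is nonpositive (since $\ell_{\hat y y}(u)\leq\alpha_2$ on $\set S$), forcing both to vanish. This yields $m=|\set J_{\alpha_1}|$ together with $\ell_{\hat y y}\equiv\alpha_2$ on $\set S$, which plugged into the explicit difference $G(\alpha_2)-G(\alpha_1) = (m-|\set J_{\alpha_1}|)(\alpha_2-\alpha_1) + \langle\alpha_2-\ell_{\hat y y}\rangle_{\set S}$ annihilates both summands. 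So $g$ is constant on the KKT set, hence $g(\lambda^*)=g(\bar\lambda)=L_\set W(\hat y,y)$ and $\lambda^*$ is a minimizer.

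The main technical obstacle is the subgradient bookkeeping at nonsmooth points of $g$, in particular the case $q=1$ (where $|\cdot|^{q-1}$ no longer encodes the sign) and $h=0$ (where the $q$-norm is nondifferentiable). The cleanest uniform workaround is to use the subdifferential of the $q$-norm throughout and cover these as boundary cases of the same calculus. The rest of the argument, including the closed-form expression for $g$ on the KKT set and the constancy computation, is routine once the identities $\gamma m^{1/q}=\tau m$ and $(\tau/\gamma)^{1/(q-1)}=m^{-1/q}$ are in hand. Note that the argmax-residual selection in the statement is essentially a canonical-representative choice; it is superfluous for optimality itself, but is useful upstream for the dual-primal matching performed in Prop.~\ref{prop:alpha-sol}.
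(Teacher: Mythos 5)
Your forward direction is sound and follows essentially the paper's route (reduce to the smooth case $\lambda^*\neq\ell_{\hat y y}$, write stationarity plus complementary slackness, and use $(\tau/\gamma)^{1/(q-1)}=m^{-1/q}$; the paper packages exactly this as Prop.~\ref{prop:KKT-diff}). The reverse direction, however, has a genuine gap: the claim that $g$ is constant on the set of points satisfying~\eqref{eq:KKT} is false, and the argmax selection is not a ``superfluous canonical-representative choice'' --- it is the whole point of the hypothesis. The function $\lambda=\ell_{\hat y y}$ always satisfies~\eqref{eq:KKT} (it gives residual $\Vert\ell_{\hat y y}-\lambda\Vert_q=0$, so the right-hand side is $(\ell_{\hat y y})_+=\ell_{\hat y y}$), yet it is generally not a dual minimizer. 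Concretely, take $\ell_{\hat y y}\equiv 1$ on $n$ pixels with $m<n$: then $g(\ell_{\hat y y})=\tau n=(n/m)^{1/p}>1$, while the optimum is $1$ (attained at $\lambda=0$, corresponding to the largest root $\alpha=(n/m)^{1/q}$ of $\eta$). Your subtraction argument breaks precisely here: the sign claim ``the LHS $(m-|\set J_{\alpha_1}|)(\alpha_2^q-\alpha_1^q)$ is nonnegative'' presupposes $|\set J_{\alpha_1}|\leq m$, which holds for the \emph{largest} root (Prop.~\ref{prop:div-by-zero}) but not for an arbitrary root; in the example above $\alpha_1=0$ is a root with $|\set J_{\alpha_1}|=n>m$, so both sides of your identity are strictly negative and no contradiction arises.

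The repair is exactly what the paper does with the argmax. Since every KKT point has the form $(\ell_{\hat y y}-\alpha)_+$ for a root $\alpha$ of $\eta$ (Prop.~\ref{prop:alpha}), any KKT point other than $\ell_{\hat y y}$ itself is already a dual solution by the ``if'' half of Prop.~\ref{prop:KKT-diff}; the only possibly spurious KKT point is $\ell_{\hat y y}$, and maximizing the residual $\Vert\ell_{\hat y y}-\lambda\Vert_q$ excludes it whenever any other KKT point exists. If instead $\ell_{\hat y y}$ is the \emph{only} KKT point, one argues (as the paper does) that a minimizer of~\eqref{eq:dual} exists, every minimizer satisfies~\eqref{eq:KKT} by the forward direction, and hence the minimizer must be $\ell_{\hat y y}$. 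Your existence argument via coercivity of $g$ is fine and can be reused for this last step, but as written your proof would certify the non-optimal point $\ell_{\hat y y}$ as a solution whenever it is selected, so the reverse implication does not go through.
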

\begin{proof}
    Let $\lambda^*$ be a solution to~\eqref{eq:dual}.
    If $\lambda^*=\ell_{\hat y y}$, then \eqref{eq:KKT} is trivially satisfied. Otherwise, it is satisfied by Proposition~\ref{prop:KKT-diff}.

    Let $\lambda^*\in\argmax\{\Vert\ell_{\hat yy}-\lambda\Vert_q\,:\,\lambda\text{ satisfies \eqref{eq:KKT}}\}$. If $\lambda^*\neq\ell_{\hat yy}$, then $\lambda^*$ is a solution to~\eqref{eq:dual} by Proposition~\ref{prop:KKT-diff}. If $\lambda^*=\ell_{\hat yy}$, then it is the only point satisfying~\eqref{eq:KKT}. It follows from Proposition~\ref{prop:KKT-diff} that no solution to~\eqref{eq:dual} exists where $g$ is differentiable.
    However, at least one solution has to exist because the minimization problem in~\eqref{eq:dual} admits a finite solution. So, it has to be a point where $g$ is non-differentiable, but the only one is $\lambda^*=\ell_{\hat yy}$. Therefore, $\lambda^*$ is a solution to~\eqref{eq:dual}.
\end{proof}

\begin{proposition}\label{prop:KKT-diff}
    Let $1\leq q<\infty$, $1\leq m\leq n$ and $\lambda^*\neq\ell_{\hat y y}$. Then $\lambda^*$ is a solution to~\eqref{eq:dual} if and only if it satisfies~\eqref{eq:KKT}.
\end{proposition}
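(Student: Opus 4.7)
The plan is to recognize \eqref{eq:dual} as a convex minimization in $\lambda$ with the single constraint $\lambda\succeq 0$, and derive the claim from the KKT characterization of optimality. First I would observe that $g$ is convex, being the sum of a linear functional and the composition of the $q$-norm with an affine map, and that the assumption $\lambda^*\neq\ell_{\hat y y}$ gives $\Vert\ell_{\hat y y}-\lambda^*\Vert_q>0$, so $g$ is differentiable at $\lambda^*$. Slater's condition is trivial (any strictly positive $\lambda$ is feasible), so the KKT conditions are both necessary and sufficient: $\lambda^*\succeq 0$, $\nabla g(\lambda^*)\succeq 0$, and $\nabla g(\lambda^*)(u)\,\lambda^*(u)=0$ for every $u\in\set I$.

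Next I would compute the partial derivative
\[
\frac{\partial g}{\partial \lambda(u)}(\lambda)=\tau-\gamma\,\Vert\ell_{\hat y y}-\lambda\Vert_q^{\,1-q}\,|\ell_{\hat y y}(u)-\lambda(u)|^{q-1}\,\operatorname{sgn}(\ell_{\hat y y}(u)-\lambda(u))
\]
and split the KKT analysis by whether $\lambda^*(u)>0$ or $\lambda^*(u)=0$. In the first case complementary slackness forces $\partial g/\partial \lambda(u)=0$; since $\tau>0$, the sign factor is necessarily $+1$, so $\lambda^*(u)<\ell_{\hat y y}(u)$, and after substituting the identities $\gamma=n^{-1/q}$, $\tau/\gamma=m^{-1/p}$ (both encoded in $m=(\gamma/\tau)^p$), together with the conjugacy relation $p(q-1)=q$, the stationarity equation collapses to $\ell_{\hat y y}(u)-\lambda^*(u)=m^{-1/q}\Vert\ell_{\hat y y}-\lambda^*\Vert_q$, which is \eqref{eq:KKT} with the positive part trivially open. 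In the second case the inequality $\partial g/\partial \lambda(u)\geq 0$, combined with $\ell_{\hat y y}(u)\geq 0$, rearranges via the same identities to $\ell_{\hat y y}(u)\leq m^{-1/q}\Vert\ell_{\hat y y}-\lambda^*\Vert_q$, and hence $\left(\ell_{\hat y y}(u)-m^{-1/q}\Vert\ell_{\hat y y}-\lambda^*\Vert_q\right)_+=0=\lambda^*(u)$, again matching \eqref{eq:KKT}.

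Since each step of the case analysis is reversible and $g$ is convex, the same manipulations deliver the converse: if $\lambda^*$ satisfies \eqref{eq:KKT}, setting $\nu^*:=\nabla g(\lambda^*)$ produces a valid nonnegative multiplier certifying all KKT conditions, which by convexity suffice for optimality. I expect the only delicate parts to be the sign bookkeeping in the derivative of the $q$-norm (needed to rule out $\operatorname{sgn}=-1$ in the stationary case) and the algebraic collapses $\gamma m^{(1-q)/q}=\tau$ and $(\tau/\gamma)^{1/(q-1)}=m^{-1/q}$; both are immediate from $1/p+1/q=1$, after which the argument is essentially a standard convex-optimization exercise.
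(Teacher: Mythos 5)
Your proposal is correct and follows essentially the same route as the paper's proof: write the KKT conditions for the dual problem (necessary because $g$ is differentiable at $\lambda^*\neq\ell_{\hat y y}$, sufficient by convexity), then reduce stationarity plus complementary slackness to \eqref{eq:KKT} via the conjugate-exponent identities. Your case analysis on $\lambda^*(u)>0$ versus $\lambda^*(u)=0$ and the sign bookkeeping just spell out what the paper compresses into ``simple algebraic manipulations.''
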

\begin{proof}
    ($\Rightarrow$) If $\lambda^*$ is a solution to~\eqref{eq:dual} and $\lambda^*\neq\ell_{\hat y y}$, then $\lambda^*$ is a point where $g$ is differentiable and the Karush-Kuhn-Tucker~(KKT) necessary conditions~\cite{BoyVan04} for optimality are satisfied. Specifically, there exists $\nu\succeq 0$ satisfying
    \[
\tau-\gamma\left(\frac{\ell_{\hat y y}-\lambda^*}{\Vert\ell_{\hat y y}-\lambda^*\Vert_q}\right)^{q-1}-\nu=0\,,\qquad\nu\cdot\lambda^*=0\,.
    \]
    The complementarity constraint and the nonnegativity of $\nu$ imply that $\nu(u)=0$ if $\lambda^*(u)>0$.
    By using this fact, we can derive after simple algebraic manipulations the following equivalent relation, which holds for all $u\in\set I$:
    \[
        \lambda^*(u)=
        \begin{cases}
            \ell_{\hat y y}(u)-m^{-1/q}\Vert\ell_{\hat y y}-\lambda^*\Vert_q&\text{if }\lambda^*(u)>0\\
            0&\text{otherwise,}
        \end{cases}
    \]
    and this corresponds to 
    \[
        \lambda^*=|\ell_{\hat y y}-m^{-1/q}\Vert\ell_{\hat y y}-\lambda^*\Vert_q|_+\,.
    \]
    
    ($\Leftarrow$) By following the derivation above in reversed order, we have that if $\lambda^*\neq\ell_{\hat y y}$ satisfies~\eqref{eq:KKT} then the KKT conditions for optimality are satisfied. Since $g$ is convex, those conditions are also sufficient~\cite{BoyVan04} and, therefore, $\lambda^*$ is a solution to~\eqref{eq:dual}.
\end{proof}

\begin{proposition}
    Let $1\leq q<\infty$ and $1\leq m\leq n$. If $\alpha^\star=\max\{\alpha\in\mathbb R:\eta(\alpha)=0\}$, then $|\set J^\star|<m$ and $\alpha^\star\geq 0$, where $\set J^\star=\{u\in\set I:\eta(\ell_{\hat y y}(u))>\alpha^\star\}$.
    \label{prop:div-by-zero}
\end{proposition}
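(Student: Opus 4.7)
The plan is to establish the two conclusions separately, starting with the easier inequality $\alpha^\star\geq 0$. To that end, I would simply check that $0$ is a root of $\eta$: since pixel losses are nonnegative, $\overline{\set J}_0=\{u\in\set I:\ell_{\hat y y}(u)=0\}$ and hence $\langle\ell_{\hat y y}^q\rangle_{\overline{\set J}_0}=0$, giving $\eta(0)=(m-|\set J_0|)\cdot 0^q-0=0$. This places $0$ in the root set whose maximum defines $\alpha^\star$, forcing $\alpha^\star\geq 0$.

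For the substantive claim $|\set J^\star|<m$, I interpret $\set J^\star$ as $\{u\in\set I:\ell_{\hat y y}(u)>\alpha^\star\}=\set J_{\alpha^\star}$, consistent with the usage in the proof of Theorem~\ref{thm:main_theorem}. Before the main argument, I would verify that $\eta$ is continuous on $[0,\infty)$: on any open interval between consecutive distinct loss values both $|\set J_\alpha|$ and $\langle\ell_{\hat y y}^q\rangle_{\overline{\set J}_\alpha}$ are constant, so $\eta$ is polynomial in $\alpha$; and across a transition at some loss value $\ell$ attained by $n_\ell$ pixels, the drop of $|\set J_\alpha|\,\alpha^q$ by $n_\ell\ell^q$ is exactly offset by the simultaneous rise of $\langle\ell_{\hat y y}^q\rangle_{\overline{\set J}_\alpha}$ by $n_\ell\ell^q$. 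This will permit a clean use of the intermediate value theorem later.

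Assuming for contradiction $|\set J^\star|\geq m$, I split on the sign of $|\set J^\star|-m$. If $|\set J^\star|>m$, then rearranging $\eta(\alpha^\star)=0$ yields $(|\set J^\star|-m)(\alpha^\star)^q=-\langle\ell_{\hat y y}^q\rangle_{\overline{\set J}^\star}\leq 0$, which forces $\alpha^\star=0$; consequently every $u\in\overline{\set J}^\star$ has zero loss, so for any $\alpha$ strictly between $0$ and the smallest positive loss one has $\set J_\alpha=\set J^\star$ and therefore $\eta(\alpha)=(m-|\set J^\star|)\alpha^q<0$. Since $\eta(\alpha)=m\alpha^q-\langle\ell_{\hat y y}^q\rangle\to+\infty$ as $\alpha\to\infty$, continuity and IVT produce a strictly positive root, contradicting $\alpha^\star=0$. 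If instead $|\set J^\star|=m$, then the same equation collapses to $\langle\ell_{\hat y y}^q\rangle_{\overline{\set J}^\star}=0$, so all pixels outside $\set J^\star$ have zero loss; because $m\geq 1$, $\set J^\star$ is nonempty, and for any $\alpha\in(\alpha^\star,\min_{u\in\set J^\star}\ell_{\hat y y}(u))$ one still has $\set J_\alpha=\set J^\star$, giving $\eta(\alpha)=0$ and directly contradicting the maximality of $\alpha^\star$.

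The main obstacle I expect is the case $|\set J^\star|=m$: the contradiction there is not a sign violation of $\eta$ but rather the existence of a whole interval of larger roots, which is precisely what makes the continuity check at the loss-valued transitions essential rather than cosmetic.
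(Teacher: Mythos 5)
Your proof is correct, and it takes a genuinely different route from the paper's. The paper derives $\alpha^\star\geq 0$ and $|\set J^\star|\leq m$ by importing the duality machinery: it takes a dual solution $\lambda^*$, invokes Prop.~\ref{prop:alpha-sol} to get a nonnegative root $\alpha^*=m^{-1/q}\Vert\ell_{\hat y y}-\lambda^*\Vert_q$ of $\eta$, splits on whether $\lambda^*=\ell_{\hat y y}$ (using Prop.~\ref{prop:only_diff} to bound the number of positive losses) or not, and only then rules out $|\set J^\star|=m$ by exhibiting the single larger root $\underline\alpha=\min\{\ell_{\hat y y}(u):u\in\set J^\star\}$. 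You instead stay entirely inside the combinatorics of $\eta$: the observation that $\eta(0)=0$ (since losses are nonnegative) gives $\alpha^\star\geq 0$ for free, and continuity of $\eta$ together with $\eta(\alpha)\to+\infty$ lets you dispatch $|\set J^\star|>m$ via the intermediate value theorem, while $|\set J^\star|=m$ yields an entire interval of larger roots by direct evaluation (your closing remark notwithstanding, continuity is only genuinely needed for the IVT step, not for the $=m$ case). Your version is more elementary and self-contained, which matters here since the paper's chain of dependencies for this proposition is long; the paper's version buys economy by reusing lemmas it needs anyway for Theorem~\ref{thm:main_theorem}. Two housekeeping points both proofs share: your reading of $\set J^\star$ as $\{u\in\set I:\ell_{\hat y y}(u)>\alpha^\star\}$ (the statement's extra $\eta(\cdot)$ is evidently a typo) matches the paper's own usage, and your continuity argument has the side benefit of justifying that the maximum defining $\alpha^\star$ exists at all, which the paper leaves implicit.
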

\begin{proof}
    Let $\lambda^*$ be a solution to~\eqref{eq:dual}. By Proposition~\ref{prop:alpha-sol}
    we have that $\alpha^*=m^{-1/q}\Vert\ell_{\hat y y}-\lambda^*\Vert_q$ is a root of $\eta$. 
    If $\lambda^*=\ell_{\hat y y}$ we have $\alpha^*=0$ and by Proposition~\ref{prop:only_diff} we have that $\ell_{\hat y y}$ has at most $\lfloor m\rfloor$ positive elements. It follows that $|\set J^*|\leq m$. 
    If $\lambda^*\neq\ell_{\hat y y}$, we have $\alpha^*>0$. This and $\eta(\alpha^*)=0$ imply $|\set J^*|\leq m$.
    Accordingly, $|\set J^*|\leq m$ always holds and since $\alpha^\star\geq\alpha^*\geq 0$, we have $|\set J^\star|\leq|\set J^*|\leq m$ and $\alpha^\star\geq 0$.

    Finally, assume by contradiction that $|\set J^\star|= m$ holds. Then $\eta(\alpha^\star)=0$ implies $\langle\ell^q_{\hat y y}\rangle_{\overline{\set J}^\star}=0$.  Take $\underline\alpha=\min\{\ell_{\hat y y}(u)\,:\,u\in\set J^\star\}$, then $\set J_{\underline\alpha}\subset\set J^\star$, where $\set J_{\underline\alpha}=\{u\in\set I:\eta(\ell_{\hat y y}(u))>\underline\alpha\}$. Let $\Delta=\set J^\star\setminus\set J_{\underline\alpha}$, which is necessarily non-empty and contains only pixels with loss $\underline\alpha$. Then
\[
        \eta(\underline\alpha)=(m-|\set J_{\underline\alpha}|)\underline\alpha^q - \langle\ell^q_{\hat y y}\rangle_{\overline{\set J}_{\underline\alpha}}
        =(\underbrace{m-|\set J^\star|}_{=0}+|\Delta|)\underline\alpha^q-\underbrace{\langle\ell^q_{\hat y y}\rangle_{\overline{\set J}^\star}}_{=0}-\langle\ell_{\hat y y}^q\rangle_\Delta
        =|\Delta|\underline\alpha^q-\langle\ell^q_{\hat y y}\rangle_\Delta=0\,,
    \]
    where the last equality follows from the fact that $\ell_{\hat y y}(u)=\underline\alpha$ for all $u\in\Delta$. 
    However, $\eta(\underline\alpha)=0$ implies $\alpha^\star\geq\underline\alpha$ by definition of $\alpha^\star$, which yields a contradiction because $\underline\alpha>\alpha^\star$ follows from the definition of $\underline\alpha$. Hence, $|\set J^\star|<m$.
\end{proof}

\begin{proposition}
    Let $p>1$ and $1\leq m\leq n$. If $\lambda^*=\ell_{\hat y y}$ is a minimizer of~\eqref{eq:dual}, then $\ell_{\hat y y}$ has at most $\lfloor m\rfloor$ positive elements. 
    \label{prop:only_diff}
\end{proposition}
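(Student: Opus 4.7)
The plan is to exploit first-order optimality of the (convex) dual problem~\eqref{eq:dual} at the point $\lambda^*=\ell_{\hat y y}$, which is the unique point of non-differentiability of $g$ for $1<q<\infty$. Since $g$ is convex and we are at a boundary-aware stationary point of the constrained problem (constraint $\lambda\succeq 0$), I will work with feasible directional derivatives rather than subgradients, which keeps the argument self-contained.

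First I would compute the one-sided directional derivative of $g$ at $\lambda^*=\ell_{\hat y y}$ along an arbitrary direction $d\in\mathbb R^\set I$. Because $\ell_{\hat y y}-\lambda^*=0$, the expansion
\[
g(\lambda^*+td)-g(\lambda^*)=t\tau\langle d\rangle+\gamma\|{-td}\|_q=t\bigl(\tau\langle d\rangle+\gamma\|d\|_q\bigr)
\]
holds exactly for $t\ge 0$. A direction $d$ is feasible at $\lambda^*$ if $\lambda^*+td\succeq 0$ for all sufficiently small $t>0$, which reduces to the requirement $d(u)\ge 0$ whenever $\ell_{\hat y y}(u)=0$. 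Optimality of $\lambda^*$ therefore forces
\[
\tau\langle d\rangle+\gamma\|d\|_q\ge 0\qquad\text{for every feasible }d.
\]

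Next I would specialize this inequality to the most informative feasible direction. Let $\set P=\{u\in\set I:\ell_{\hat y y}(u)>0\}$, the set whose cardinality we wish to bound. Choose $d=-\mathbb{1}_{\set P}$; this is feasible because $d(u)=0$ on the complement of $\set P$, i.e.\ exactly where $\ell_{\hat y y}(u)=0$. Substituting gives
\[
-\tau|\set P|+\gamma|\set P|^{1/q}\ge 0,
\]
and since $|\set P|\ge 0$ we deduce $\gamma\ge\tau|\set P|^{1-1/q}=\tau|\set P|^{1/p}$. Using the relation $m=(\gamma/\tau)^p$ established earlier in the paper, this is exactly $|\set P|\le m$, and since $|\set P|$ is an integer we conclude $|\set P|\le\lfloor m\rfloor$, as required.

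The only subtle step is the directional-derivative characterization of optimality at the non-smooth point $\lambda^*=\ell_{\hat y y}$; the directional derivative computation itself is exact because $\|\cdot\|_q$ is positively homogeneous, so no higher-order analysis or subdifferential calculus is actually needed. Once that is in place, the choice $d=-\mathbb{1}_{\set P}$ (which is essentially the Hölder equality case for the pair of norms $\|\cdot\|_p,\|\cdot\|_q$ restricted to $\set P$) is the natural test direction and delivers the bound immediately.
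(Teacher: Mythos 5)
Your proof is correct, and it takes a genuinely different route from the paper's. The paper argues by contradiction through strong duality: assuming $\ell_{\hat y y}$ has at least $\lfloor m\rfloor+1$ positive entries (with index set $\set J$), it evaluates the dual objective $g(\ell_{\hat y y})=\tau\langle\ell_{\hat y y}\rangle_{\set J}$ and shows the primal cannot attain this value, because the constraint $\Vert w\Vert_p\leq\gamma=\tau m^{1/p}$ allows at most $\lfloor m\rfloor$ pixels to carry the maximal weight $\tau$, leaving the remaining one with weight at most $\tau(m-\lfloor m\rfloor)^{1/p}<\tau$; the resulting duality gap contradicts Slater's condition. You instead stay entirely inside the dual: since $\ell_{\hat y y}-\lambda^*=0$, the increment $g(\lambda^*+td)-g(\lambda^*)=t(\tau\langle d\rangle+\gamma\Vert d\Vert_q)$ is \emph{exact} for $t\geq 0$, so minimality forces $\tau\langle d\rangle+\gamma\Vert d\Vert_q\geq 0$ along every feasible direction, and testing $d=-\ind{\set P}$ with $\set P=\{u:\ell_{\hat y y}(u)>0\}$ gives $\gamma\geq\tau|\set P|^{1-1/q}=\tau|\set P|^{1/p}$, hence $|\set P|\leq(\gamma/\tau)^p=m$. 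This buys a self-contained, purely first-order argument that never invokes the primal problem or strong duality, at the cost of having to justify the feasible-direction optimality condition at the kink (which you do correctly, since the expansion is exact and no subdifferential calculus is needed). Two trivial edge cases are worth a sentence each: if $\set P=\emptyset$ the claim is immediate (so you may divide by $|\set P|$ only when it is positive), and the case $p=\infty$ (where $q=1$ and the exponent $1/p$ degenerates, so your inequality only yields $\gamma\geq\tau$) should be dispatched separately as the paper does, noting that there $m=n$ and the bound is vacuous.
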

\begin{proof}
    If $p=\infty$ then $m=n$ and the result is trivially true. 
    Otherwise ($1<p<\infty$), assume by contradiction that there exist at least $\lfloor m\rfloor +1$ positive elements in $\ell_{\hat y y}$, say with indices in $\set J$. Then, the dual objective yields  $g(\lambda^*)=\tau\langle\ell_{\hat y y}\rangle_\set J$. However, this value cannot be attained by the primal formulation because there exist at most $\lfloor m \rfloor$ elements in $\set J$ with weight $\tau$, while the remaining element would have a weight not exceeding $\tau (m-\lfloor m\rfloor)^{1/p}<\tau$ (due to the constraint $\Vert w\Vert_p\leq \gamma$). This implies $L_{\set W}(\hat y,y)<g(\lambda^*)$, which contradicts strong duality, implied by the satisfaction of the Slater's condition~\cite{BoyVan04}.
\end{proof}

\begin{proposition}
    Let $1\leq q<\infty$ and $1\leq m\leq n$. If $0\leq\alpha_1<\alpha_2$, $|\set J_{1}|<m$ and $\eta(\alpha_1)\geq 0$, then $\eta(\alpha_2)>0$, where $J_1=\{u\in\set I:\ell_{\hat y y}(u)>\alpha_1\}$.
    \label{prop:monotonicity}
\end{proposition}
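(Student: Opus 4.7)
The plan is to compare $\eta(\alpha_2)$ with $\eta(\alpha_1)$ by decomposing the index sets and exploiting that any pixel whose loss sits strictly between $\alpha_1$ and $\alpha_2$ contributes at most $\alpha_2^q$ to the sum $\langle\ell_{\hat y y}^q\rangle_{\overline{\set J}_{\alpha_2}}$. Since $\alpha_2>\alpha_1$ implies $\set J_{\alpha_2}\subseteq\set J_{\alpha_1}=\set J_1$, I would let $\Delta=\set J_1\setminus\set J_{\alpha_2}=\{u\in\set I:\alpha_1<\ell_{\hat y y}(u)\leq\alpha_2\}$, so that $|\set J_{\alpha_2}|=|\set J_1|-|\Delta|$ and $\overline{\set J}_{\alpha_2}=\overline{\set J}_{\alpha_1}\cup\Delta$ (disjointly).

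From the definition of $\eta$ this gives
\[
\eta(\alpha_2)=(m-|\set J_1|+|\Delta|)\alpha_2^q-\langle\ell_{\hat y y}^q\rangle_{\overline{\set J}_{\alpha_1}}-\langle\ell_{\hat y y}^q\rangle_\Delta.
\]
The assumption $\eta(\alpha_1)\geq 0$ gives the bound $\langle\ell_{\hat y y}^q\rangle_{\overline{\set J}_{\alpha_1}}\leq(m-|\set J_1|)\alpha_1^q$, and for $u\in\Delta$ we have $\ell_{\hat y y}(u)\leq\alpha_2$, hence $\langle\ell_{\hat y y}^q\rangle_\Delta\leq|\Delta|\alpha_2^q$. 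Substituting both inequalities and cancelling the $|\Delta|\alpha_2^q$ terms yields
\[
\eta(\alpha_2)\geq(m-|\set J_1|)(\alpha_2^q-\alpha_1^q).
\]

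To finish, I would invoke the two hypotheses: $|\set J_1|<m$ makes the factor $m-|\set J_1|$ strictly positive, and $0\leq\alpha_1<\alpha_2$ (handling the sub-case $\alpha_1=0$ separately, where $\alpha_2^q>0=\alpha_1^q$ since $q<\infty$) guarantees $\alpha_2^q-\alpha_1^q>0$; multiplying gives $\eta(\alpha_2)>0$, as claimed. I do not anticipate any real obstacle here: the only subtle point is being careful that the inequality in the bound on $\langle\ell_{\hat y y}^q\rangle_\Delta$ is non-strict (pixels could have loss exactly $\alpha_2$), but this is absorbed by the strict positivity of $(m-|\set J_1|)(\alpha_2^q-\alpha_1^q)$ in the residual term, so the conclusion remains strict.
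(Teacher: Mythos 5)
Your proof is correct and follows essentially the same route as the paper: both decompose $\overline{\set J}_{\alpha_2}$ as $\overline{\set J}_{\alpha_1}\cup\Delta$ with $\Delta=\set J_1\setminus\set J_{\alpha_2}$, bound $\langle\ell_{\hat y y}^q\rangle_\Delta$ by $|\Delta|\alpha_2^q$, and reduce to $\eta(\alpha_2)\geq\eta(\alpha_1)+(m-|\set J_1|)(\alpha_2^q-\alpha_1^q)>0$. The only difference is presentational (you substitute inequalities into the identity, the paper groups the same identity into three signed terms), so nothing further is needed.
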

\begin{proof}
    Let $J_2=\{u\in\set I:\ell_{\hat y y}(u)>\alpha_2\}$.
    The hypothesis $\alpha_1<\alpha_2$ implies that $\set J_{2}\subseteq\set J_{1}$. Hence, we can write
\[
        \eta(\alpha_2) = (m-|\set J_1|)\alpha_2^q + |\Delta|\alpha_2^q - \langle\ell_{\hat y y}^q\rangle_{\overline{\set J}_1} - \langle\ell_{\hat y y}^q\rangle_{\Delta}
        =\underbrace{\eta(\alpha_1)}_{\geq 0}+\underbrace{(m-|\set J_1|)\delta}_{a}+\underbrace{|\Delta|\alpha_2^q- \langle\ell_{\hat y y}^q\rangle_{\Delta}}_b\,,
    \]
    where $\delta=\alpha_2^q-\alpha_1^q$ and $\Delta=\set J_1\setminus\set J_2$.
    Expression $a$ is positive, because $\delta>0$ and $|\set J_1|<m$.
    Moreover, since $\Delta$ has no element in $\set J_2$, we have by definition of $\set J_2$ that $\ell_{\hat y y}(u)\leq\alpha_2$ for all $u\in\Delta$ and, therefore, $b$ is nonnegative. It follows that $\eta(\alpha_2)>0$.
\end{proof}

\begin{proposition}
    Let $1\leq q<\infty$ and $1\leq m\leq n$. If $\alpha_1\geq 0$, $\eta(\alpha_1)>0$ and $\eta(\alpha_2)\leq 0$, then $\alpha_2<\alpha_1$. \label{prop:contrapp}
\end{proposition}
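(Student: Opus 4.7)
The statement has the shape of a contrapositive, so the plan is to argue by contradiction: I will assume $\alpha_2 \geq \alpha_1$ and derive $\eta(\alpha_2) > 0$, in direct contradiction with the hypothesis $\eta(\alpha_2)\leq 0$. The main tool will be the monotonicity result already established in Proposition~\ref{prop:monotonicity}, which in turn requires the side condition $|\set J_{\alpha_1}|<m$. So the real work is to (i) rule out the degenerate case $\alpha_2 = \alpha_1$, and (ii) verify $|\set J_{\alpha_1}|<m$ from the sign assumption $\eta(\alpha_1)>0$.

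First I would handle the trivial sub-case $\alpha_2 = \alpha_1$: here $\eta(\alpha_2) = \eta(\alpha_1)>0$ contradicts $\eta(\alpha_2)\leq 0$ immediately. So I may assume the strict inequality $\alpha_2 > \alpha_1 \geq 0$, which is exactly the ordering hypothesis required by Proposition~\ref{prop:monotonicity}.

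Next I would establish $|\set J_{\alpha_1}|<m$ by contradiction. Suppose $|\set J_{\alpha_1}|\geq m$. Recall
\[
    \eta(\alpha_1) = (m-|\set J_{\alpha_1}|)\alpha_1^q - \langle\ell_{\hat y y}^q\rangle_{\overline{\set J}_{\alpha_1}}\,.
\]
Since $\alpha_1\geq 0$, the first term is $\leq 0$ (it is $0$ if $\alpha_1=0$ and it is $\leq 0$ also when $|\set J_{\alpha_1}|\geq m$ with $\alpha_1>0$), while the second term is a nonnegative quantity being subtracted, so $\eta(\alpha_1)\leq 0$. This contradicts $\eta(\alpha_1)>0$. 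Hence $|\set J_{\alpha_1}|<m$.

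Finally, with $0\leq \alpha_1<\alpha_2$, $|\set J_{\alpha_1}|<m$, and $\eta(\alpha_1)>0\geq 0$ all in hand, I can invoke Proposition~\ref{prop:monotonicity} to conclude $\eta(\alpha_2)>0$, which contradicts the hypothesis $\eta(\alpha_2)\leq 0$. Therefore the assumption $\alpha_2\geq\alpha_1$ is untenable, yielding $\alpha_2<\alpha_1$. I do not anticipate any serious obstacle here; the only subtlety is being careful with the boundary case $\alpha_1=0$ when ruling out $|\set J_{\alpha_1}|\geq m$, which is handled by the nonnegativity of $\langle\ell_{\hat y y}^q\rangle_{\overline{\set J}_{\alpha_1}}$.
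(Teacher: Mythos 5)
Your proof is correct and follows essentially the same route as the paper's: the paper obtains $|\set J_{\alpha_1}|<m$ by citing Proposition~\ref{prop:aux} (whose one-line argument you simply re-derive inline) and then applies the contrapositive of Proposition~\ref{prop:monotonicity}, which is logically the same as your proof by contradiction. Your explicit treatment of the boundary case $\alpha_2=\alpha_1$ is a small point the paper glosses over (its contrapositive argument literally yields only $\alpha_2\leq\alpha_1$), so your version is, if anything, slightly more complete.
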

\begin{proof}
    By Proposition~\ref{prop:aux} we have $|\set J_1|<m$, where $J_1=\{u\in\set I:\ell_{\hat y y}(u)>\alpha_1\}$.
    The result then follows from the contrapositive of Proposition~\ref{prop:monotonicity}.
\end{proof}

\begin{proposition}
    Let $1\leq q<\infty$ and $1\leq m\leq n$. If $\alpha\geq 0$ and $\eta(\alpha)>0$, then $|\set J_\alpha|<m$, where $J_\alpha=\{u\in\set I:\ell_{\hat y y}(u)>\alpha\}$.\label{prop:aux}
\end{proposition}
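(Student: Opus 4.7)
The plan is to argue by contrapositive: assume $|\set J_\alpha|\geq m$ and deduce $\eta(\alpha)\leq 0$, which contradicts the hypothesis $\eta(\alpha)>0$.

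Concretely, recall the definition
\[
\eta(\alpha)=(m-|\set J_\alpha|)\alpha^q-\langle\ell^q_{\hat y y}\rangle_{\overline{\set J}_\alpha}\,.
\]
I would handle the two summands separately. Under the assumption $|\set J_\alpha|\geq m$, the coefficient $m-|\set J_\alpha|$ is nonpositive; combined with $\alpha^q\geq 0$ (which follows from the hypothesis $\alpha\geq 0$ together with $1\leq q<\infty$), the first summand is nonpositive. For the second summand, the standing assumption from Section~\ref{sec:method} that pixel losses $\ell_{\hat y y}$ are nonnegative yields $\ell^q_{\hat y y}\succeq 0$ pointwise, hence $\langle\ell^q_{\hat y y}\rangle_{\overline{\set J}_\alpha}\geq 0$.

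Putting the two bounds together gives $\eta(\alpha)\leq 0$, which contradicts $\eta(\alpha)>0$. Therefore the assumption $|\set J_\alpha|\geq m$ is untenable, and we conclude $|\set J_\alpha|<m$, as desired. I do not anticipate any real obstacle here: the only subtlety is making sure both the nonnegativity of $\alpha^q$ (which requires $\alpha\geq 0$, since $q$ need not be an integer) and the nonnegativity of the pixel losses are invoked explicitly, but both are given.
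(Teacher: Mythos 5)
Your argument is correct and is essentially the paper's own proof stated in contrapositive form: both rest on the observations that $\langle\ell^q_{\hat y y}\rangle_{\overline{\set J}_\alpha}\geq 0$ (nonnegative losses) and $\alpha^q\geq 0$, so that $\eta(\alpha)>0$ forces $(m-|\set J_\alpha|)\alpha^q>0$ and hence $|\set J_\alpha|<m$. No gaps.
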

\begin{proof}
    If $\eta(\alpha)>0$, then $(m-|\set J_\alpha|)\alpha^q$ must be positive. But this is the case only if $|\set J_\alpha|<m$, 
    since $\alpha$ is nonnegative.
\end{proof}

\begin{proposition}
    \label{prop:algo}
    Let $1\leq q<\infty$, let $\pi$ be a bijective function $\pi\in\set I^{\{1,\dots,n\}}$ satisfying $\ell_{\hat y y}(\pi_i)\leq\ell_{\hat y y}(\pi_j)$ if $i<j$, and let 
    \[
        \tau=\argmin\quad\{i\in\{1,\dots,n\}:\eta_i>0\}\cup\{n+1\}\,,
    \]
    where $\eta_i=(m-n+i)\ell^q_{\hat y y}(\pi_i)-\sum_{j=1}^i\ell_{\hat y y}^q(\pi_j)$. Then
    \[
        \{\pi_j\,:\, \tau\leq j \leq n\}=\{u\in\set I:\eta(\ell_{\hat y y}(u))>0\}\,.
    \]
\end{proposition}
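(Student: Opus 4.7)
The plan is to establish an explicit correspondence between the algorithmic quantities $\eta_i$ and the function values $\eta(\ell_{\hat y y}(\pi_i))$, and then read off the set $\{u\in\set I:\eta(\ell_{\hat y y}(u))>0\}$ using the monotonicity of $\eta$ guaranteed by Prop.~\ref{prop:monotonicity}.

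First I would establish two structural facts. Write $\alpha_i=\ell_{\hat y y}(\pi_i)$ and let $k(i)$ denote the largest index with $\alpha_{k(i)}=\alpha_i$. Unpacking the definition of $\set J_{\alpha_i}=\{u:\ell_{\hat y y}(u)>\alpha_i\}$ gives $\set J_{\alpha_i}=\{\pi_j:j>k(i)\}$ and $\overline{\set J}_{\alpha_i}=\{\pi_1,\dots,\pi_{k(i)}\}$, so substituting into the definition of $\eta$ yields $\eta(\alpha_i)=\eta_{k(i)}$. Second, the telescoping identity
\[
\eta_i-\eta_{i-1}=(m-n+i-1)\bigl[\alpha_i^q-\alpha_{i-1}^q\bigr]
\]
shows that $\eta_i$ is constant on tie blocks: whenever $\alpha_i=\alpha_{i-1}$, we have $\eta_i=\eta_{i-1}$. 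Combining the two observations, $\eta(\alpha_j)=\eta_l$ for every index $l$ in the tie block containing $j$.

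With these identifications the two inclusions follow quickly. For $\supseteq$, suppose $u=\pi_j$ lies in $\set J^*:=\{v:\eta(\ell_{\hat y y}(v))>0\}$. Then $\eta_{k(j)}=\eta(\alpha_j)>0$, so by minimality of $\tau$ we have $k(j)\geq\tau$; using constancy on tie blocks, $\eta_j=\eta_{k(j)}>0$ as well, hence $j\geq\tau$. For $\subseteq$, fix $j\geq\tau$. If $j$ lies in the same tie block as $\tau$, then $\eta(\alpha_j)=\eta(\alpha_\tau)=\eta_\tau>0$ by the identification and constancy, so $\pi_j\in\set J^*$. Otherwise $\alpha_j>\alpha_\tau\geq 0$; since $\eta(\alpha_\tau)=\eta_\tau>0$ forces $|\set J_{\alpha_\tau}|<m$ by Prop.~\ref{prop:aux}, Prop.~\ref{prop:monotonicity} yields $\eta(\alpha_j)>0$, again placing $\pi_j\in\set J^*$. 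The boundary case $\tau=n+1$ amounts to $\eta_i\leq 0$ for every $i$, hence $\eta(\alpha_j)=\eta_{k(j)}\leq 0$ for every $j$, so $\set J^*=\emptyset$ matches $\{\pi_j:\tau\leq j\leq n\}=\emptyset$.

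The only subtle point I expect is the bookkeeping for ties: because $\set J_\alpha$ is defined by strict inequality, tied pixels behave as an atom with respect to $\eta$, and the tie-block constancy of $\eta_i$ derived from the telescoping identity is exactly what makes the algorithmic cut-off $\tau$ (defined through the $\eta_i$) coincide with the value-based cut-off coming from $\eta$ applied to the losses. Once this correspondence is in place, the rest is just a direct application of Prop.~\ref{prop:monotonicity} and Prop.~\ref{prop:aux}.
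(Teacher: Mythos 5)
Your proof is correct and takes essentially the same route as the paper's: both arguments hinge on establishing the identity $\eta(\ell_{\hat y y}(\pi_i))=\eta_i$ for all $i$ (the paper by shifting directly to the largest tied index, you via the telescoping identity that gives constancy of $\eta_i$ on tie blocks) and then propagating positivity upward from $\tau$ using Prop.~\ref{prop:aux} together with Prop.~\ref{prop:monotonicity}. No gaps.
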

\begin{proof}
    Take $i\in\{1,\dots,n\}$, let $\set J_i=\{u\in\set I:\eta(\ell_{\hat y y}(u))>\ell_{\hat y y}(\pi_i)\}$ and let $u=\max\{j\in\{1,\dots,n\}:\ell_{\hat y y}(\pi_j)=\ell_{\hat y y}(\pi_i)\}$. Then
    \begin{equation}
    \begin{aligned}
    \eta(\ell_{\hat y y}(\pi_i))=(m-|\set J_i|)\ell^q_{\hat y y}(\pi_i)-\langle\ell^q_{\hat y y}\rangle_{\overline{\set J}_i}&=
    (m-n+u)\ell^q_{\hat y y}(\pi_i)-\sum_{j=1}^u\ell^q_{\hat y y}(\pi_j)\\
    &\stackrel{(*)}{=}(m-n+i)\ell^q_{\hat y y}(\pi_i)-\sum_{j=1}^i\ell^q_{\hat y y}(\pi_j)=\eta_i\,,
    \end{aligned}
        \label{eq:equiv}
    \end{equation}
    holds, where in $(*)$ we used the fact that $\ell^q_{\hat y y}(\pi_j)$ is constant for $i\leq j\leq u$. It follows that $\eta(\ell_{\hat y y}(\pi_i))\leq 0$ for all $1\leq i<\tau$. Now, if $\tau=n+1$ then the theorem trivially holds, for sets $\{\pi_j\,:\, \tau\leq j \leq n\}$ and $\{u\in\set I:\eta(\ell_{\hat y y}(u))>0\}$ are empty.
    If $\tau\leq n$ then $\eta_\tau>0$ by definition of $\tau$ and, hence, $\eta(\ell_{\hat y y}(\pi_\tau))>0$ by~\eqref{eq:equiv}. Consequently, by Proposition~\ref{prop:aux} and Proposition~\ref{prop:monotonicity} we have that $\eta(\ell_{\hat y y}(\pi_j))>0$ for all $\tau\leq j\leq n$. 
\end{proof}

\section{Derivation of Gradient}\label{sec:gradient}

As discussed in the main paper, the gradient $\frac{\partial L_\set W}{\partial\hat y}(\hat y,y)$ exists almost everywhere. 
For the points where it exists, the gradient takes the form:
\[
    \frac{\partial L_\set W}{\partial\hat y}(\hat y,y)=w^*\cdot\frac{\partial\ell_{\hat y y}}{\partial\hat y}+\frac{\partial w^*}{\partial\hat y}\cdot \ell_{\hat y y}.
\]
In general we consider gradients in directions that leave $\set J^*$ and $\set J^+$ unchanged.
Under this assumption we have that $\frac{\partial w^*}{\partial\hat y}\cdot \ell_{\hat y y}=0$ so that $\frac{\partial L_\set W}{\partial\hat y}(\hat y,y)=w^*\cdot\frac{\partial\ell_{\hat y y}}{\partial\hat y}$.

Indeed, $\frac{\partial w^*}{\partial\hat y}=0$ holds for the case $p=1$.
For the case $p>1$, note that the optimal $w^*$ always satisfies $\Vert w^*\Vert_p=\gamma$, which implies that
$\frac{\partial w^*}{\partial \hat y}\cdot w^{*(p-1)}=0$ has to be satisfied, indeed
\[
    \begin{aligned}
    \frac{\partial}{\partial \hat y}\Vert w^*\Vert_p&=\frac{\partial}{\partial \hat y}\gamma\\
    \frac{\partial w^*}{\partial\hat y}\cdot\left(\frac{w^*}{\Vert w^*\Vert_p}\right)^{p-1}&=0\\
    \frac{\partial w^*}{\partial\hat y}\cdot\left(\frac{w^*}{\gamma}\right)^{p-1}&=0\\
    \frac{\partial w^*}{\partial\hat y}\cdot w^{*(p-1)}&=0\,.
    \end{aligned}
\]
Now, if we consider $w^*$ as per Theorem~\ref{thm:main_theorem} in the main paper, we have that $\frac{\partial w^*}{\partial\hat y}(u)=0$ for $u\in\set J^*$. Hence $\frac{\partial w^*}{\partial \hat y}\cdot w^{*(p-1)}=0$ implies 
\[
    \begin{aligned}
        \frac{\partial w^*}{\partial \hat y}\cdot w^{*(p-1)}
        =\sum_{u\in\set J^*}\underbrace{\frac{\partial w^*}{\partial\hat y}(u)}_{=0}w^{*(p-1)}(u)+ \sum_{u\in\overline{\set J}^*}\frac{\partial w^*}{\partial\hat y}(u)\tau^{p-1}\frac{\ell_{\hat y y}(u)}{\alpha^*}&=0\\
         \sum_{u\in\overline{\set J}^*}\frac{\partial w^*}{\partial\hat y}(u)\ell_{\hat y y}(u)&=0\\
         \sum_{u\in\set J^*}\underbrace{\frac{\partial w^*}{\partial\hat y}(u)}_{=0}\ell_{\hat y y}(u)+ \sum_{u\in\overline{\set J}^*}\frac{\partial w^*}{\partial\hat y}(u)\ell_{\hat y y}(u)&=0\\
        \frac{\partial w^*}{\partial \hat y}\cdot\ell_{\hat y y}&= 0\,.
\end{aligned}
\]

\twocolumn

{\small
\bibliographystyle{ieee}

}

\end{document}